\newcommand{\hA}{\hat{A}}
\newcommand{\hK}{\hat{K}}
\DeclareMathOperator{\sign}{sign}
\DeclareMathOperator{\diag}{diag}
\newtheoremstyle{mytheoremstyle}{0pt}{0pt}{\itshape}{}{\bfseries}{.}{.5em}{} 
\newtheoremstyle{myplain}
  {1cm plus 1cm minus 0.5cm}% ⟨Space above⟩
  {1cm plus 1cm minus 0.5cm}% ⟨Space below⟩
  {\itshape}% ⟨Body font⟩
  {}% ⟨Indent amount⟩
  {\bfseries}% ⟨Theorem head font⟩
  {.}% ⟨Punctuation after theorem head⟩
  {.5em}% ⟨Space after theorem head⟩2
  {}% ⟨Theorem head spec (can be left empty, meaning ‘normal’)⟩
\newtheorem{theorem}{Theorem}
\newtheorem{lemma}{Lemma}
\newtheorem{claim}{Claim}
\renewenvironment{proof}[1][\proofname]{\par
  \vspace{-\topsep}% remove the space after the theorem
  \pushQED{\qed}%
  \normalfont
  \topsep0pt \partopsep0pt % no space before
  \trivlist
  \item[\hskip\labelsep
        \itshape
    #1\@addpunct{.}]\ignorespaces
}{%
  \popQED\endtrivlist\@endpefalse
  \addvspace{6pt plus 6pt} % some space after
}
\begin{document}

\twocolumn[

\aistatstitle{Greedy Frank-Wolfe Algorithm for Exemplar Selection}

\aistatsauthor{ Gary Cheng \And Armin Askari \And Kannan Ramchandran \And Laurent El Ghaoui }

\aistatsaddress{ UC Berkeley \And UC Berkeley \And UC Berkeley \And UC Berkeley } ]

\begin{abstract}
 In this paper, we consider the problem of selecting representatives from a data set for arbitrary supervised/unsupervised learning tasks. We identify a subset $S$ of a data set $A$ such that 1) the size of $S$ is much smaller than $A$ and 2) $S$ efficiently describes the entire data set, in a way formalized via convex optimization. 
 %We formulate a boolean selection optimization problem designed to recover the exemplar set $S$. We then analyze the convex relaxation of the problem, which can be interpreted as an auto-regressive version of dictionary learning. 
 In order to generate $|S| = k$ exemplars, our kernelizable algorithm, Frank-Wolfe Sparse Representation (FWSR), only needs to execute $\approx k$ iterations with a per-iteration cost that is quadratic in the size of $A$. This is in contrast to other state of the art methods which need to execute until convergence with each iteration costing an extra factor of $d$ (dimension of the data). Moreover, we also provide a proof of linear convergence for our method. We support our results with empirical experiments; we test our algorithm against current methods in three different experimental setups on four different data sets. FWSR outperforms other exemplar finding methods both in speed and accuracy in almost all scenarios.
\end{abstract}

\section{Introduction}
\subsection{Overview}
In the areas of computer vision, signal processing and machine learning, it has become important not only to improve the performance of models, but also to be able to train these models efficiently. This has motivated areas like dimensionality reduction that help save on computational resources and memory requirements by compressing the feature space; a non-exhaustive list of techniques include PCA \citep{wold1987}, random projections \citep{candes2006}, generalized discriminant analysis \citep{mika1999}, local linear embeddings \citep{roweis2000} and non-negative matrix factorization \citep{lee1999}.

A related problem is reducing the object-space, or reducing the number of data points in a data set. Exemplar selection is aimed at solving this problem: finding a minimal set of representatives, or \textit{exemplars}, of the data set that effectively represent the rest of the data points. %This not only provides an efficient method of summarizing large data sets for a human observer, but also provides supervised/unsupervised learning algorithms with a smaller data set in place of the original. In a setting where multiple supervised tasks have to be done on the same large data sets, the extra cost of selecting examplars first can be negligible compared to the speed up in the training of future models, with only a modest degradation in performance. 
These methods can be separated into two groups: wrapper methods and filter methods. The former selects exemplars based on the accuracy obtained by a classifier, whereas the latter approach selects exemplars based on an objective function which is not based on a classifier \citep{olvera2010}. In this paper, we work with filter methods.

\subsection{Paper contribution}
Existing filter methods for exemplar selection are either fast but do not perform well on different learning tasks, or perform well on learning tasks but do not scale well with larger data sets. In this work, we 
\begin{enumerate}
    \item Develop a boolean formulation for the exemplar finding problem
    \item Develop a kernelizable, greedy Frank-Wolfe based algorithm, Frank-Wolfe Sparse Representation (FWSR), to optimize a convex relaxation of the boolean problem
    \item Reduce per-iteration cost of state-of-the-art methods from $\mathcal{O}(n^2d)$ to $\mathcal{O}(n^2)$
    \item Explain the greedy, early termination condition of FWSR
    \item Prove a linear convergence rate for FWSR
\end{enumerate}

Finally, we compare FWSR against other exemplar selection methods in three different experimental setups. 

\section{Related Literature}
The filter method of finding exemplars based on a sparse, auto-regressive model (SMRS) was introduced by \citep{vidal12}. Extensions of this work include Sparse Subspace Clustering (SSC) which uses the learned coefficient matrix as an affinity matrix in spectral clustering \citep{ng01}. SMRS and its variants such as D-SMRS \citep{dsmrs} and Kernelized SMRS \citep{ksmrs} currently attain state-of-the-art results for exemplar selection on different supervised learning tasks. The aforementioned methods use the Alternating Direction Method of Multipliers (ADMM) to solve an optimization problem that requires a one-time inversion of a dense matrix, as well as dense matrix multiplications at every iteration. Even with the state-of-the-art speed improvements applied to SMRS, the \textit{per-iteration} and up-front cost is still $\mathcal{O}(n^2 d)$ \citep{pourkamali2018efficient}, making these methods unsuitable for even moderately-sized data sets. \citep{vidal16} try to address this concern by introducing a greedy Orthogonal Matching Pursuit relaxation of SSC. However, in doing so, they remove the group lasso penalty and shift their focus from exemplar selection to clustering.

The auto-regressive formulation of exemplar selection can be thought of as a version of dictionary learning. Methods like K-SVD \citep{ksvd2006} attempt to solve the regression problem 
\begin{align*}
    \min_{D, X} \left\|A - DX \right\|_{F}^2 :  \forall i,\ \left\|X_{(i)}\right\|_{0} \leq k,
\end{align*} 
where $A$ is the data matrix, and $X_{(i)}$ represents the $i$th column of $X$. In the setting of exemplar selection, we restrict the dictionary $D$ to be the data matrix $A$. SMRS and other similar works \citep{sapiro12} can be seen as solving this particular instance of dictionary learning. Note that in K-SVD, simply replacing $D$ by $A$ generates the trivial solution $X = I$, motivating the introduction of the group lasso constraint.

An instance of exemplar selection that is not formulated as an auto-regressive optimization is $k$-medoids \citep{kmedoid1987}. Unlike $k$-means, $k$-medoids  requires that the centers of the clusters be data points, which can be treated as exemplars of the $k$ classes. However, $k$-medoids in general does not converge to the global optimum and does not necessarily cluster points lying on the same subspace.

There are other indirect methods whose solutions can be interpreted in the context of exemplar selection. For instance, Rank Revealing QR Decomposition (RRQR) \citep{rrqr1992} selects data points based on a permutation matrix of the data which gives a well conditioned submatrix. The Column Subset Selection Problem (CSSP) is also related to selecting exemplars. The problem is to identify $k$ columns of a matrix $A$, called $C$, which minimize $\left\|A - P_C A\right\|_F$ where $P_C$ is the projection operation onto $C$. Other ways of addressing this problem include randomized sketching methods like CUR decomposition \citep{drineas08}; \citep{boutsidis09} analyze a variant that combines ideas from CUR decomposition with RRQR.

There is also another body of work related to exemplar finding called coreset construction. Coreset construction is in the same spirit as exemplar selection and has had recent success in the context of PCA and $k$-means \citep{feldman13, feldman16}. Despite this, the aforementioned coreset algorithms are wrapper methods, and it is unclear how to generalize their construction to arbitrary learning problems \citep{campbell17}. We instead focus on filter methods, which are problem-agnostic.

In this paper, we employ the Frank-Wolfe algorithm \citep{frank1956algorithm} for constructing our set of exemplars. Although introduced in the optimization community over half a century ago, the Frank-Wolfe algorithm (also known as the conditional gradient algorithm) has experienced renewed interest in recent years due to its vast applications in machine learning \citep{lacoste2013affine}. In particular, the algorithm is a greedy one that for certain problem formulations results in sparse iterates and solutions which is widely applicable to the sparse learning community. Although there are different variants of the algorithm based on the one originally proposed, these other methods make assumptions that do not fit the setup of our problem (see Remark 2 of Section \ref{sec:rate}). In the context of exemplar selection, \citep{clarkson2010coresets} marry the ideas of coreset construction and the Frank-Wolfe algorithm. Specifically, the authors sharpen bounds on coreset construction and algorithmic convergence rates for canonical machine learning problems. However, their results pertain to problems that can be written as the maximization of a concave function over the simplex. In this work, we instead opt to work with a group lasso domain and show that our formulation is indeed a relaxation of the natural boolean problem. Furthermore, we are able to obtain a linear convergence rate for our problem while the Frank-Wolfe algorithm for an arbitrary problem only produces a sublinear convergence rate \citep{frank1956algorithm}.
%In spite of the nice theoretical properties of these methods, their bounds depend on large constants which prove to be inconvenient in practice and are not specifically solving the problem of interest. In particular, with the introduction of a sparsity inducing constraint in the autoregressive model, these methods return different sets of exemplars which we compare and discuss later in the paper. \todo{Unclear}

\section{Problem Formulation}
\subsection{Notation}
% $\left\|X \right\|_{q, 1} \coloneqq  \sum_{i=1}^{n}\left\|X^{(i)}\right\|_{q}$.
Let $\left\| \cdot \right\|_{F}$ be the Frobenius norm. Let $X^{(i)}$ and $X_{(i)}$ denote the $i$-th row and column of a matrix $X$ respectively; $X_{ij}$ denotes the $(i,j)$th entry of a matrix $X$. Let $X_t$ denote the value of $X$ on the $t$th iteration. Let $e_j$ denote the $j$th standard basis vector. For $q\geq1$, we refer to $\sum_{i=1}^{n}\|X^{(i)}\|_{q}$ as the ``$q$-norm group lasso''. We use $\mathcal{M}_{q, \beta}$ to denote the the $q$-norm group lasso ball of radius $\beta$. We denote our feature matrix as $A \in \mathbb{R}^{d \times n }$ where each column represents a data point in $d$-dimensional space. $k$ refers to the number of desired exemplars. We define the Gram matrix $K \coloneqq A^\top A$. Finally, $\textbf{1}$ denotes a vector of ones of appropriate dimension.

\subsection{Objective}
\subsubsection{Boolean Selection Problem}
We formulate exemplar selection as a boolean selection problem:
\begin{align*}
\min_{u \in \{0,1\}^n}
\sum_{i=1}^n  \lambda  u_i + 
\min_{x_i} \| A \diag(u)x_i - a_i\|_2^2 + \rho^2 \|x_i\|_2^2 %\label{firstbool}
\end{align*}
This objective uses the boolean vector $u$ to select a subset of data points (exemplars) which are most cost-efficient in representing the entire dataset via ridge regression. The hyperparameters $\rho$ and $\lambda$ control the ridge term and sparsity of $x$ and $u$ respectively.
% \TODO{How does adding the $1 / \left\| a_i\right\|_{2}^2$ term affect the obj} 
We can rewrite the objective in matrix form:
\begin{align}\label{eq:bool}
\phi \coloneqq
\min_{u \in \{0,1\}^n} 
\lambda 1^\top u +\min_X  \left\|A \diag(u) X - A\right\|_{F}^2  + \rho^2 \left\|X\right\|_{F}^2 
\end{align}
\subsubsection{Convex Relaxation}
% where the $j$th column of $X$ is $x_j$ from \eqref{firstbool}. 
From \eqref{eq:bool}, it is clear that $u_j = 0 \iff X^{(j)} = 0$. This implies $u_j = \mathbf{1}(X^{(j)})$, where $\mathbf{1}(v) = 0$ if $v$ is the $0$ vector and 1 otherwise. We use this fact to make the following relaxation:
\begin{align*}
\rho^2 \|X^{(j)}\|_{2}^2 &+ \lambda \mathbf{1}(X^{(j)}) \geq 2\lambda {\mathcal B}\left(\frac{\rho \|X^{(j)}\|_{2}}{\sqrt{\lambda}}\right)\\
&= \left\{ \begin{array}{ll}
2\rho \sqrt{\lambda} \left\|X^{(j)}\right\|_{2} & \mbox{if } \left\|X^{(j)}\right\|_{2} \leq \sqrt{\lambda}/\rho, \\
\rho^2\left\|X^{(j)}\right\|_{2}^2 +\lambda  & \mbox{otherwise.}
\end{array}
\right.\\
&\geq 2 \rho \sqrt{\lambda} \|X^{(j)}\|_{2}
\end{align*}
We call $\mathcal{B}$ the \textit{reverse Huber function}. By repeating this relaxation for $j = 1 \ldots n$, the boolean constraints relax to form the group lasso penalty:
\begin{align*}
\phi \geq \min_X \left\|AX - A\right\|_{F}^2 + 2 \rho \sqrt{\lambda} \sum_{i=1}^{n} \|X^\top e_i\|_{2}
\end{align*}
% \TODO{Mention why we don't deal with reverse Huber function. Should this be done later?}
It should be noted, that while the reverse Huber function is a tighter relaxation, we instead directly use the $l_2$ norm, as it will lead to sparse updates in the FWSR algorithm highlighted in Section \ref{sec:contribute}. Additionally, to fit the framework of Frank-Wolfe, we use the equivalent constrained version of the problem where we have a group lasso constraint instead of a regularization term. With these changes, and the addition of a penalty on translational invariance of $A$, the new training problem becomes
%is because the Frank-Wolfe algorithm will have sparse updates at each iteration for the $l_2$ norm but not necessarilly for the reverse Huber function. As we see later, this sparse update property proves to be a useful in developing our greedy algorithm.
% \TODO{change eta to some other variable}
\begin{align} \label{eq:opt}
\min_X f(X) \coloneqq \: \min_X\: &\|A X - A\|_{F}^2 + \eta^2 \|X^\top \textbf{1} - \textbf{1}\|_{2}^2\\
\text{s.t. } &\sum_{i=1}^{n}\|X^{(i)}\|_{q} \leq \beta \nonumber
\end{align}
where $\beta, q, \eta$ are hyperparameters. Intuitively, \eqref{eq:opt} identifies a sparse subset of the data points that best span (i.e., represent) the entire data set. \eqref{eq:opt} can be alternatively viewed as a convex relaxation of the dictionary learning problem, where the dictionary is the data set itself. In this setup, suppose we solve for $X^* = \arg \min_X f(X)$. Then the data points $A^{(j)}$ such that $X_{(j)} \neq 0$ are our exemplars. For the remainder of the paper, we will use the term ``data points corresponding to the non-zero rows of $X^*$" to describe the selected exemplar data points. The row-sparsity (i.e., number of non-zero rows) of $X^*$ is controlled by our choice of  $\beta$ and $q$. Empirically, we have found that $q=2$ generally performs the best. The notion of translational invariance was originally introduced as a constraint ($\textbf{1}^\top X = \textbf{1}^\top$) by \citep{vidal12}\footnote{If $AX = A$ for some $X$, then $\forall z\in \mathbb{R}^d$, $\textbf{1}^\top X = \textbf{1}^\top \iff (A + z\textbf{1}^\top)X = A + z\textbf{1}^\top$}; here we use the $\eta$ hyperparameter to from a relaxed, penalized version of the constraint in order to make our algorithm simpler. It should be noted that the primary motivation behind adding the $\eta$ penalty is due to better observed empirical results. For simplicity, for the remainder of the paper, we rewrite the translation invariance penalty by implicitly augmenting the $A$ matrix with the row $\eta \textbf{1}^\top$. FWSR is also amendable to an optional non-negativity $X \geq 0$ constraint, which is often used with image or text data sets since it has real life interpretations \citep{sapiro12}. We omit it in \eqref{eq:opt} because empirically in our experiments, it did not provide a noticeable benefit.

%In some instances, $\eta = 0$ proves to be the most fitting; other times $\eta = 100$ perform best. \TODO{The motivation behind adding this penalty is primarily empirical and less theoretical}

\section{Contributions}
\label{sec:contribute}
We propose a greedy Frank-Wolfe algorithm, Frank-Wolfe Sparse Representation (FWSR), for solving \eqref{eq:opt} that is faster than other exemplar selection methods and whose selected exemplars enjoy higher test set accuracies when trained on a variety of data sets. The pseudocode of FWSR is displayed in Algorithm \ref{sub1}. Unlike other methods, FWSR will not necessarily solve \eqref{eq:opt} to convergence. 
% OPTION 1: Instead, FWSR will greedily terminate as soon as $k$ rows are dense, subsequently returning the exemplars corresponding to those indices. \TODO{should we clearly describe how its possible to converge before termin}
Rather, FWSR starts with no exemplars and greedily selects one exemplar at every iteration, resulting in the algorithm either 1) greedily terminating as soon as $k$ rows are non-zero and subsequently returning the corresponding exemplars or 2) converging to a $r < k$ row-sparse solution and return those $r$ indices. For either of these cases, we provide Theorem \ref{thm:linconv}, which characterizes a linear convergence rate of FWSR, which means the iterations needed before termination is not large.

\subsection{Algorithm Description}
\label{sec:algdes}
Recall that Frank-Wolfe is a projection-free algorithm that optimizes an objective over a closed, convex set by moving towards the minimizer of its linear approximation at each iteration. Frank-Wolfe is comprised of the following steps: 1) calculate gradient 2) solve linear minimization oracle (LMO) to find descent direction 3) calculate optimal step size via exact line search and 4) repeat until terminating condition. Below, we walk through the steps for FWSR.

\subsubsection{Preprocessing}
We center $A$ columnwise (i.e., vertically; center along datapoints) to replace the need for an explicit bias term. Then, we form $K = A^\top A$ in $\mathcal{O}(n^2d)$ time since it is used frequently later in the algorithm. Note that $K$ can be replaced with any kernel matrix $\Phi(A)$ since FWSR relies only on the Gram Matrix $K$ and not on $A$ directly. We refer to the kernelized variant of Algorithm \ref{sub1} as K-FWSR.
%[we got rid of this]Now, we add a $\eta^2 \textbf{1}\textbf{1}^T$ term to account for the translation invariant penalty, shown in \eqref{eq:opt} \TODO{get rid of this by saying its absorbed into matrix A}.
Next, we initialize $X_0$ to the $0$ matrix which represents having selected no exemplars at the start of the algorithm. 

% We talk about this in the next paragraph \TODO{put this in the grad calc paragraph?}
% We also store a variable $KX$ and set its zeroth iteration to $0$. These preprocessing steps are executed one time prior to the execution of the following steps.

\subsubsection{Gradient Calculation}
The gradient of our objective function is $2KX -2K$. Even though $K$ is calculated once, explicitly calculatig the gradient at each iteration is expensive (naively $\mathcal{O}(n^3)$) due to the matrix-matrix product. Because of the structure of the problem, we are able to efficiently calculate $(KX)_t$. As explained in subsection \ref{sec:step}, $X_t$ is a weighted average of $X_{t-1}$ and a rank-1 matrix $S_{t-1} = e_j v^\top$ for some $j$ and vector $v$. This implies that $(KX)_t$ is a weighted average of $(KX)_{t-1}$ and $K S_{t-1} = K_{(j)}v^\top$. Since we know $(KX)_{t-1}$ at step $t$, we can calculate $(KX)_{t}$ and the entire gradient in $\mathcal{O}(n^2)$ time as shown in lines \ref{grad_update} and \ref{grad_calc} of Algorithm \ref{sub1}. 

\subsubsection{LMO Calculation}
\label{sec:lmo}
With the gradient formed, we then solve the LMO, which specifies the direction of descent. In general, the LMO recovers a direction $S_t$ to take the next step, specified by the following optimization problem:
\begin{align*}
S_t = LMO(\nabla f(X_t)) = \arg\min_{S' \in \mathcal{M}} \langle S', \nabla f(X_t)\rangle
\end{align*}
In the case when the vertices of the feasible set are sparse, the Frank-Wolfe algorithm produces sparse iterates. Because of the group lasso constraint in \eqref{eq:opt}, the solution matrix $S_t$ will be a rank-1 matrix with only one non-zero row $s$ at index $j = \arg\max_i \|\nabla f(X_t)^{(i)}\|_{p}$, where $p$ corresponds to the dual-norm of $q$. The magnitude of the non-zero row of $S_t$ will be $\beta$ (i.e., $\|S_t^{(j)}\|_{q} = \beta$), and the direction of the row will be chosen to minimize the inner product. Depending on the value of $q$, we can change lines \ref{max_lmo} and \ref{lmo} of Algorithm \ref{sub1} accordingly. Specifically, line \ref{lmo} becomes:
% \TODO{we can throw this enumeration in the appendix if needed}
\begin{enumerate}
  \item if $q = 1$, then $S_t^{(j)}$ will be all zero except the entry corresponding to the largest magnitude value in $\nabla f(X_t)^{(j)}$; this entry will have value $-\beta$
  \item if $q = 2$, then $S_t^{(j)}$ will be of the form $-\beta \nabla f(X_t)^{(j)} / \|\nabla f(X_t)^{(j)}\|_{2}$
  \item if $q = \infty$, then $S_t^{(j)}$ will have the form such that the $l$th entry, $s_l = -\beta \sign(\nabla f(X_t)_{jl})$ 
\end{enumerate}

Now with $S_t$ solved for, we can explicitly form the direction of descent, $D_t \coloneqq S_t - X_t$. 

% \TODO{generalize algorithm stuff to other norms in the comment section?}
\subsubsection{Step Size Calculation}
\label{sec:step}
Once the LMO is solved, the next iterate is calculated by performing exact line search along the direction between the previous iterate and $D_t$. More explicitly, the optimal step size $\gamma_t$ at iteration $t$ is
\begin{align*}
\gamma_t = \arg \min_{\gamma'} f(X_t + \gamma' D_t)
\end{align*}

For \eqref{eq:opt}, $\gamma_t$ has a closed form expression. To calculate $\gamma_t$ in $\mathcal{O}(n^2)$ time, we expand the numerator and denominator of the value in line \ref{opt_step} of Algorithm \ref{sub1}. We independently calculate each component and use the fact that the trace of matrix products can be performed in quadratic time. 

Since at most one extra row ($j$th row) becomes non-zero via a rank one perturbation (the addition of $\gamma_t S_t$), line 17 can be interpreted as the algorithm selecting the $j$th data point as an exemplar. 

\subsubsection{Repetition \& Terminating Conditions}
\label{sec:repeat}

% [We did this in the gradient section]Now in preparation of the next iteration, we reupdate $KX$. Notice that because the $\gamma_t$ step is a rank one update to $X$, we can quickly update this term (show in line \ref{grad_update} of algorithm \ref{sub1}) in $\mathcal{O}(n^2)$ time. Now, using $(KX)_{t+1}$, we can re-form the gradient $2KX - 2K$ in $\mathcal{O}(n^2)$ time as opposed to reforming it every iteration in $\mathcal{O}(n^2 d)$ time.

% Overall, this means that the most expensive step is forming $K$ which occurs once prior to the execution of the iterative part of the Frank Wolfe algorithm and costs $\mathcal{O}(n^2 d)$. \TODO{feels out of place to mention this} Also tangentially note, that because our algorithm only depends on $K$ and not on $A$ directly, we can kernelize our algorithm by replacing $K$ with a kernel matrix \TODO{$\Phi(A, A)$}.
% After a fixed cost of $\mathcal{O}(n^2 d)$, where typically $n \gg d$, is used to calculate the Gram Matrix $K$ \TODO{Kernelization?}, we can identify $k$ exemplars after running our algorithm $\approx k$ iterations, with the cost of each iteration being $\mathcal{O}(n^2)$. This is achieved by noting that at each iteration we can make a rank $1$ update to the gradient. 
We repeat this three step (gradient calculation, LMO, step size calculation) procedure until either \textbf{NumExemplars}$(X_t)$, the row-sparsity of the iterate $X_t$, is equal to the number of desired exemplars $k$ or we converge (line \ref{term_thres} is satisfied). At this point, the algorithm terminates with \textbf{PickExemplars}$(A, X_t, k)$ returning the columns of $A$ that correspond to the non-zero rows of $X_t$. 
% Note that Algorithm \ref{sub1} relies only on the Gram matrix $K$ and not on $A$ directly, which implies that we can trivially kernelize our algorithm.

\begin{algorithm}[t]
    \caption{Frank-Wolfe Sparse Representation ($q = 2$)} 
    \label{sub1}
    \begin{algorithmic}[1]
    % \PROCEDURE{FWSR}{$A, k, \beta, \eta$}
    \STATE {\bfseries Input:} $A \in \mathbb{R}^{d, n}, k, \beta, \eta$
    \STATE center $A$ column-wise \& augment $A$ with row $\eta \textbf{1}^\top$
    \STATE $K = A^\top A$
    \STATE $X_0, S_0, E, \gamma_0, j, t= 0, 0, 0, 0, 0, 1$
    \STATE $(KX)_0 = 0$
    \WHILE {$E < k$}
        \STATE $(KX)_t = (1 - \gamma_{t-1}) (KX)_{t-1} + \gamma_{t-1} K_{(j)}S_{t-1}^{(j)}$ \label{grad_update}
        \STATE $\nabla f_t = 2(KX)_t - 2K$ \label{grad_calc}
        \STATE $j = \arg \max_i \|(\nabla f_t)^{(i)}\|_{2}$ \label{max_lmo}
        \STATE $S_t = 0$
        \STATE $S_t^{(j)} = -\beta (\nabla f_t)^{(j)}/\|(\nabla f_t)^{(j)}\|_{2}$ \label{lmo}
        \STATE $D_t = S_t - X_t$ 
        \IF {$- \langle \nabla f_t, D_t \rangle< \delta$} \label{term_thres}
            \STATE \textbf{break}
        \ENDIF
            \STATE $\gamma_t = \min\left( 1, \frac{Tr(D_t^\top (K - KX_t))}{D_t^\top K D_t} \right)$ \label{opt_step}
            \STATE $X_{t+1} = X_t + \gamma_t D_t$
            \STATE $E = $\textbf{NumExemplars}$(X_{t+1})$
            \STATE $t = t + 1$
    \ENDWHILE
    \STATE \textbf{Return: }\textbf{PickExemplars}$(A, X_t, k)$
    % \ENDPROCEDURE
    \end{algorithmic}
\end{algorithm}

\subsection{Convergence Rate} \label{sec:rate}
We now present a convergence rate for FWSR.
% \TODO{We should talk about convergence properities of Frank-Wolfe in the related lit}
\begin{theorem}\label{thm:linconv}
%I believe the claim is that our convergence theorem works for all cases, as $\hat A$ is surjective to the span of $\hat A$, so 
 
%\TODO{it might be smart to appeal to the strongly convex result in case the convergence rate is faster}

%For $\beta > n$, if $f$ is strongly convex (e.g., $n \leq d$ and $A$ is full-row rank), \TODO{check because of the extra row of 1s}, Frank-Wolfe will converge to the identity matrix with a linear convergence rate as specified by \TODO{citation}. 
For $\beta > n$, 
%\TODO{I don't think we need surjectivity to $R^d$ because its self regression? Like A is surjective to the space spanned by A?} 
%\TODO{is the iff condition for not-strongly convex $n \geq rank(A)$?}, 
the iterate generated by Algorithm 1, $X_t$, satisfies
\begin{align}
    \|AX_t - A\|_F^2 \leq \dfrac{4\beta^2C^2 \lambda_{\max}(K)\nu^2 }{C^2 \nu^{2 - 2t} + t} .\nonumber
\end{align}
where $C > 0, \nu \in (0,1)$ are constants. For sufficiently large $t$, the convergence rate is linear.

For $\beta \leq n$, the convergence rate is in general sublinear as shown in \citep{frank1956algorithm}.
\end{theorem}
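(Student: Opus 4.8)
The plan is to exploit the single structural consequence of the hypothesis $\beta > n$: the identity matrix $X^\star = I$ is \emph{feasible and interior}. Its group-lasso norm is $\sum_{i=1}^n \|e_i\|_q = n < \beta$, while $f(I) = \|AI - A\|_F^2 = 0$ is the global unconstrained minimum; hence $X^\star = I$ solves \eqref{eq:opt} with $f(X^\star)=0$, and the entire group-lasso ball of radius $\beta - n$ centered at $I$ lies inside $\mathcal{M}_{q,\beta}$. Writing $h_t := f(X_t) = \|AX_t - A\|_F^2$ (the quantity to be bounded) and $R_t := X_t - I$, we have $\nabla f(X_t) = 2K R_t$ and $h_t = \mathrm{Tr}(R_t^\top K R_t)$. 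The contrast with $\beta \le n$, where every zero-residual solution sits on the boundary and the interior argument below collapses, is exactly what separates the linear from the sublinear regime.

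Next I would write the standard Frank-Wolfe descent inequality for the exact-line-search step $\gamma_t$. Since $f$ is quadratic with Hessian governed by $K$, its curvature over $\mathcal{M}_{q,\beta}$ is bounded by $C_f \le \lambda_{\max}(K)\,\mathrm{diam}(\mathcal{M}_{q,\beta})^2$, and each vertex $\beta e_j v^\top$ has Frobenius norm $\beta$, so $\mathrm{diam}(\mathcal{M}_{q,\beta}) \le 2\beta$ and $C_f \lesssim 4\beta^2\lambda_{\max}(K)$ --- precisely the prefactor in the theorem. With the Frank-Wolfe gap $g_t := -\langle \nabla f(X_t), D_t\rangle = \max_{S\in\mathcal{M}_{q,\beta}}\langle \nabla f(X_t), X_t - S\rangle$, the line-search guarantee gives
\[
h_{t+1} \;\le\; h_t - \min\!\Big(\tfrac{g_t}{2},\ \tfrac{g_t^2}{2C_f}\Big).
\]

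The crux is to produce \emph{two} lower bounds on $g_t$ and feed them into this recursion. The first is weak duality, $g_t \ge h_t$, which through the $g_t^2/(2C_f)$ term gives $h_{t+1} \le h_t - h_t^2/(2C_f)$ and hence the classical $O(1/t)$ decay --- the source of the additive $t$ in the denominator. The second uses the interior optimum: since the radius-$(\beta-n)$ ball around $I$ is feasible, $g_t \ge \langle \nabla f(X_t), R_t\rangle + (\beta - n)\max_i\|\nabla f(X_t)^{(i)}\|_p \ge (\beta-n)\max_i\|\nabla f(X_t)^{(i)}\|_p$, where $p$ is dual to $q$. I then establish a Polyak--{\L}ojasiewicz (PL) inequality $\|\nabla f(X_t)\|_F^2 = 4\,\mathrm{Tr}(R_t^\top K^2 R_t) \ge 4\sigma\,\mathrm{Tr}(R_t^\top K R_t) = 4\sigma\, h_t$, where $\sigma$ is the smallest \emph{nonzero} eigenvalue of $K$; this holds even when $K$ is singular (i.e. without strong convexity of $f$), because the null-space components of the columns of $R_t$ contribute nothing to either trace. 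Combining with norm-equivalence on the rows gives $g_t^2 \ge c\,h_t$ for an explicit $c = c(\beta,n,\sigma,q)$, and the $g_t^2/(2C_f)$ term now yields a genuine contraction $h_{t+1} \le \nu^2 h_t$ with $\nu^2 = 1 - c/(2C_f) \in (0,1)$ --- the source of the $\nu^{2-2t}$ term.

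The final step, and the main obstacle, is to merge these two recursions into the single displayed bound rather than stating them separately. I would induct on the claim $h_t \le 4\beta^2 C^2\lambda_{\max}(K)\nu^2 / (C^2\nu^{2-2t} + t)$: the contraction $h_{t+1}\le\nu^2 h_t$ must be shown to advance the $C^2\nu^{2-2t}$ denominator factor by the multiplier $\nu^{-2}$, while the weak-duality step advances the additive counter from $t$ to $t+1$, and the constants $C,\nu$ must be chosen --- together with verifying that $\gamma_t < 1$ eventually, so that the quadratic rather than the clipped branch of the $\min$ is active --- so that the inductive step closes simultaneously in both denominator terms. Asymptotically $\nu^{2-2t}$ dominates $t$, so $h_t = O(\nu^{2t})$, giving the stated linear rate for large $t$; the delicate bookkeeping that keeps both terms additive through the induction is where the real care is required. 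For $\beta \le n$ the interior bound on $g_t$ is unavailable and only weak duality survives, recovering the sublinear rate of \citep{frank1956algorithm}.
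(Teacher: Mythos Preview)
Your proposal is correct and arrives at the same result, but the route differs from the paper's in two notable ways.

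First, the mechanism for turning ``interior optimum'' into a gap lower bound: you work in the domain $\mathcal{M}_{q,\beta}$, use the group-lasso ball of radius $\beta-n$ around $I$ to bound $g_t \ge (\beta-n)\max_i\|\nabla f(X_t)^{(i)}\|_p$, and then invoke a PL inequality $\|\nabla f\|_F^2 \ge 4\sigma h_t$ with $\sigma$ the smallest \emph{nonzero} eigenvalue of $K$ (valid because null-space components of $R_t$ contribute to neither trace). The paper instead passes to the image $\{\hat A u : u\in\mathcal{M}_{q,\beta}\}$, uses the Open Mapping Theorem to show $\hat A\hat 1$ lies in its relative interior, and takes a competitor point $\hat A\hat 1 + r\,\tfrac{\hat A\hat 1-\hat A w}{\|\hat A\hat 1-\hat A w\|}$ to lower-bound the cosine between $\hat A s_t-\hat A w_t$ and $\hat A\hat 1-\hat A w_t$ directly by $(\sqrt{f(w_t)}+r)/(2\beta\sqrt{\lambda_{\max}})$. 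Your $\sigma$ and their $r$ play analogous roles; your argument makes the constant explicit in terms of the spectrum of $K$, while theirs is purely geometric and existential.

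Second, the merging step: the paper's cosine bound yields \emph{both} the linear and sublinear contributions in a single stroke, since expanding $(\sqrt{f(w_t)}+r)^2\ge f(w_t)+r^2$ gives $f(w_{t+1})\le f(w_t)\bigl(\nu^2 - f(w_t)/(4\beta^2 C^2\lambda_{\max})\bigr)$ directly; substituting $x_t=f(w_t)/(4\beta^2 C^2\lambda_{\max}\nu^2)$ produces the logistic recursion $x_{t+1}\le \nu^2 x_t(1-x_t)$, for which they quote a lemma of Campbell--Broderick (Lemma~A.6 in \cite{campbell17}) giving exactly $x_t\le x_0/(\nu^{-2t}+x_0 t)$. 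Your proposal instead derives the contraction and the $O(1/t)$ recursion separately and then inducts on the stated bound; this works (indeed, if you keep the $2h_t$ term in $g_t\ge 2h_t+(\beta-n)c_1\sqrt{h_t}$ rather than dropping it, squaring and feeding into $h_{t+1}\le h_t-g_t^2/(2C_f)$ recovers exactly the same logistic recursion), but the paper's packaging via the cited lemma is cleaner and avoids what you call the ``delicate bookkeeping''. Your observation that $\gamma_t<1$ is needed for the quadratic branch is handled in the paper by a short lemma showing $0\le\gamma_t\le 1$ holds for all $t$ whenever $\beta>n$.
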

% \vspace{-1mm}
%\begin{proof}
%The full proof can be found in the Appendix \ref{proof_appendix}. We outline a high level sketch of the proof. Recall that the froebenius norm objective is a function of matrix. For simplicity, we flatten the $X$ matrix to a vector $w$ and reshape the $A$ and $K$ matrices accordingly to form $\hat{A}$ and $\hat{K}$. Let $s_t$ be the vector (i.e., flattened) version of $S_t$

% By rewriting the cost function $f(w_{t+1})$ as a function of the previous step $w_{t}$, we can identify a key recurrence relation:
% \begin{align*}
% f(w_t) &= f(w_t)\left( 1 - \frac{(\hA s_t - \hA w_t)^\top (\hA \hat{1} - \hA w_t)}{\left\|\hA s_t-  \hA w_t\right\|_{2} \left\| \hA \hat{1} - \hA w_t\right\|_{2}} \right)
% \end{align*}

% By the definition of how $s_t$ is constructed, we can replace $s_t$ with a point that lies between $A \hat{1}$ and $A w$ which lies in the relative interior of the group lasso ball. This point must exist by the construction of $\beta > n$. In doing this substitution, we can upper bound the the cost and construct a recurrence relation that satisfies the conditions of lemma \ref{A6}. Upon simplifying, we can simplify the recurrence relation to obtain our desired bound.
%\end{proof}
The proof of Theorem \ref{thm:linconv} can be found in \ref{apx:proof}. This theorem shows that FWSR will not need to run for a large number of steps, no matter whether FWSR converges to a solution that has a row-sparsity which is greater than $k$ (i.e., FWSR greedily terminates) or less than $k$ (i.e., FWSR meets gradient convergence condition in line \ref{term_thres}).

\paragraph{Remark 1} One might notice that the choice of $\beta > n$ allows for the trivial solution, the identity matrix. This leads to $n$ exemplars. However, due to the greedy nature of FWSR, the value provided by FWSR lies in which $k \ll n$ rows become non-zero first upon the execution of the algorithm and not necessarily on the ultimate solution to which the algorithm converges to.

\paragraph{Remark 2}
We do not use other variants of Frank-Wolfe (Away-step, Pairwise) because they require storing and cycling through a growing active set of vertices.  The 2-norm group lasso ball has an infinite number of vertices and the $\infty$-norm group lasso ball has number of vertices exponential in $d$, making the potential size of the active set too large. While the $1$-norm group lasso ball only has a finite number of vertices linear in the dimension, it enforces element-wise sparsity as opposed to $q=2,\infty$ which impose row-sparsity. Moreover, $q=2$ empirically outperforms $q=1,\infty$. Another reason we do not use Pairwise Frank-Wolfe is that its analysis is contingent on the domain being the closed convex-hull of a \textit{finite} number of vertices. This makes it incompatible with the 2-norm group lasso ball.

% [WE expanded the value of the theorem to include strongly convex]. 
\paragraph{Remark 3} The primary value of Theorem \ref{thm:linconv} is for the case when the objective is not strongly convex, as previous results from \citep{guelat86} already demonstrate a linear convergence rate for the strongly-convex setting when $\beta > n$. Notice that the non-strongly convex setting (e.g., $n > d$) is of interest for our application. Theorem \ref{thm:linconv} ensures that we still have a linear convergence rate even when the number of data points is much larger than the dimension. It should be noted that \citep{lacoste2015global} have linear convergence results for functions of the form $g(Ax) + b^\top x$ where $g(\cdot )$ is a strongly convex function. However, this result is only for Frank-Wolfe variants with polytope-domains. It is of independent interest to find whether the aforementioned results of \citep{guelat86} can be adapted to functions of the form $g(Ax) + b^\top x$.

% \TODO{mention: we do not prove which convergence rate is faster in the strongly convex setting}
% \TODO{I talked about the number of iterates we have to run earlier in an effort to highlight the benefits}
% The value of Theorem \ref{thm:linconv} may be more explicitely seen in the following example. Take for instance the scenerio
% % This theorem is useful in addressing the following scenerio. For example, one notable concern is the scenario 
% where we are interested in selecting $k$ exemplars but the algorithm converges to a solution with row-sparsity $r$. With Theorem \ref{thm:linconv}, by setting $\beta > n$, either 1) the algorithm converges to a $r > k$ row-sparse solution such as the identity, in which case we can greedily terminate the algorithm immediately after the iterate $X_t$ is $k$ row-sparse, or 2) the algorithm converges to a $r < k$ row-sparse solution, in which case we select only $r$ exemplars after converging. For either case, Theorem \ref{thm:linconv} shows that we do not need to run the algorithm for a large number of steps. 
% Our result is very applicable to the realistic, non-strongly convex scenerio when $n > d$, (e.g., when we have a very large dataset). \TODO{expand on how our stuff is novel in comparison to the literature.}

 % no matter whether the objective is strongly convex or not, or , 

\subsection{Computational \& Space Complexity}
FWSR requires an up-front cost of $\mathcal{O}(n^2 d)$ to form the Gram Matrix. Each subsequent sparse iteration takes $\mathcal{O}(n^2)$ time to execute as explained in Section \ref{sec:algdes}. Moreover, due to the greedy property of FWSR, the algorithm terminates either when $k$ exemplars have been selected (i.e., when $X_t$ has $k$ non-zero rows), or the error is below a certain threshold (line \ref{term_thres}). As stated earlier, Theorem \ref{thm:linconv} proves that in either case, the number of iterations will not be large. With respect to the amount of storage needed, FWSR's sparse iterates means that we only need to keep track of at most $k$ non-zero rows in $X$. Hence the space complexity of FWSR, excluding the kernel matrix, is $\mathcal{O}(kn)$. 

%\TODO{explicitly mention that these results are with the best/fastest version of ADMM}
SMRS and its variants are able to attain state of the art results on different supervised learning tasks using ADMM. In addition to also requiring the $\mathcal{O}(n^2 d)$ calculation of the Gram Matrix $K$, ADMM requires a dense matrix inversion as a preprocessing step and a dense matrix multiplication in every subsequent iteration, resulting in a $\mathcal{O}(\min(n^3, n^2 d))$ cost per iteration using the results of \citep{pourkamali2018efficient}. Additionally, since the iterates generated by ADMM are not necessarilly sparse, iterating until convergence as well as tuning the sparsity hyperparameter/terminating tolerance is necessary. These are expensive requirements that drastically hurt the performance of SMRS. With respect to space complexity, the potential of dense iterates implies that there could be iterations where all $n$ rows of the variable matrix could be non-zero, which results in a $\mathcal{O}(n^2)$ space complexity in addition to the kernel matrix. 

\begin{table}
\centering
\begin{tabular}{ lll }
 \toprule
 &  \textsc{Computation} & \textsc{Memory}\\
 \midrule
 FWSR   & $n^2d + \min(k^\dagger, T) n^2$    & $n^2 + kn$\\
 SMRS&    $n^2d +  T\min(n^2 d, n^3)$ & $n^2 + n^2$   \\
\bottomrule
\end{tabular}
 \caption{Computation term is comprised of complexity to calculate Gram Matrix plus time it takes to iterate to the solution. Memory term is comprised of space for Gram Matrix plus the space for variable matrix. $T$ is the number of iterations required for convergence. $k$ is the number of exemplars desired. $k^\dagger$ is the number of iterations until $X$ is $k$-row-sparse; empirically, $k^\dagger \approx k$.}
\label{tbl:compare}
\end{table}

A comparison of computational and space complexity of FWSR and SMRS is summarized in Table \ref{tbl:compare}. It is clear that with respect to per-iteration cost, number of iterations required, and space complexity, FWSR is able to outperform SMRS. In the following section, we support our claims with empirical results.

% Two notable concerns are the convergence rate of the algorithm as well as how to select $k$ exemplars when the optimal solution $X^*$ is $r < k$ row-sparse. From the literature, we know that if the objective is strongly convex and the optimal $X^*$ is in the relative interior of the group LASSO constraint, then the convergence rate of the Frank-Wolfe Algorithm is linear \citep{guelat86}. However, it is often the case that $n \gg d$ which means the objective is not strongly convex. This can be addressed by introducing a ridge term $\lambda \|X\|_F^2$ in $f(X)$ but then there is no simple way to check that $X^*$ belongs in the relative interior of the domain. For the issue of selecting $k$ exemplars, if the optimal solution $X^*$ itself is $r < k$ sparse, then there is no guarantee the algorithm will select $k$ exemplars. We address these issues in the following theorem.

\section{Empirical Results}
%\TODO{Rerun speed tests of SMRS when d < n}
%\TODO{to support our claims and to show that our method does not sacrifice accuracy, we run empirical experiments}
We compare FWSR against 4 different data reduction methods (random subset selection, SMRS, $k$-medoids, and RRQR)\footnote{We do not consider D-SMRS and Kernelized SMRS since both introduce additional hyperparameters which, coupled with their runtimes on relatively larger data sets, make cross validation very computationally intensive.} in three different experimental setups: two in an unsupervised setting and one in a supervised setting. The first experiment 1) randomly generates $30$ exemplars we wish to recover, 2) takes random convex combinations of these exemplars to generate a data set, and 3) uses exemplar selection methods to recover the generated exemplars from 1). 
The second experiment is on a synthetic Gaussian data set of $k$ artificial clusters. We measure how well each exemplar selection algorithm was able to recover one point from each cluster.
In the last experiment, we compare the algorithms on downstream classification tasks for labeled data sets.   

The Matlab code we use for SMRS is taken from \citep{vidal12}. We modify this code using the matrix inversion lemma as seen in \citep{pourkamali2018efficient}. We use \citep{kmed} implementation of the $k$-medoids algorithm, and we use scipy to implement RRQR. We coded FWSR in python.

In FWSR, the effect of $\beta$ is highly dependent on the number of data points, $n$. In an effort to disentangle this dependency, we parameterize $\beta$ as $n/\alpha$ where $\alpha$ is a hyperparameter that we choose; typically $\alpha \in [0.5,50]$. Additionally, to enforce that SMRS selects no more than $k$ exemplars, we choose the data points corresponding to the $k$ largest $\ell_2$ norm rows of the returned coefficient matrix $X$ as exemplars as explained in \citep{vidal12}.

\subsection{Experiment 1 - Random Convex Combinations}
To quantitatively demonstrate the performance of our algorithm, we first generate 30 exemplar data points in $\mathbb{R}^{200}$. Then, we generate 120 additional data points by repeatedly randomly selecting 3 exemplar data points and performing a random convex combination of these points and adding mean zero Gaussian noise. We then use FWSR, SMRS, RRQR, and $k$-medoids to recover 30 candidate exemplars. Figure \ref{fig:synthetic_rcc} plots the average fraction of exemplars recovered over 10 trials against the standard deviation of the noise. It is clear from the figure that FWSR is not only fast, but also the only method that is unaffected by increasing noise.

\begin{figure*}[t]
    \centering
    \includegraphics[width=0.8\linewidth]{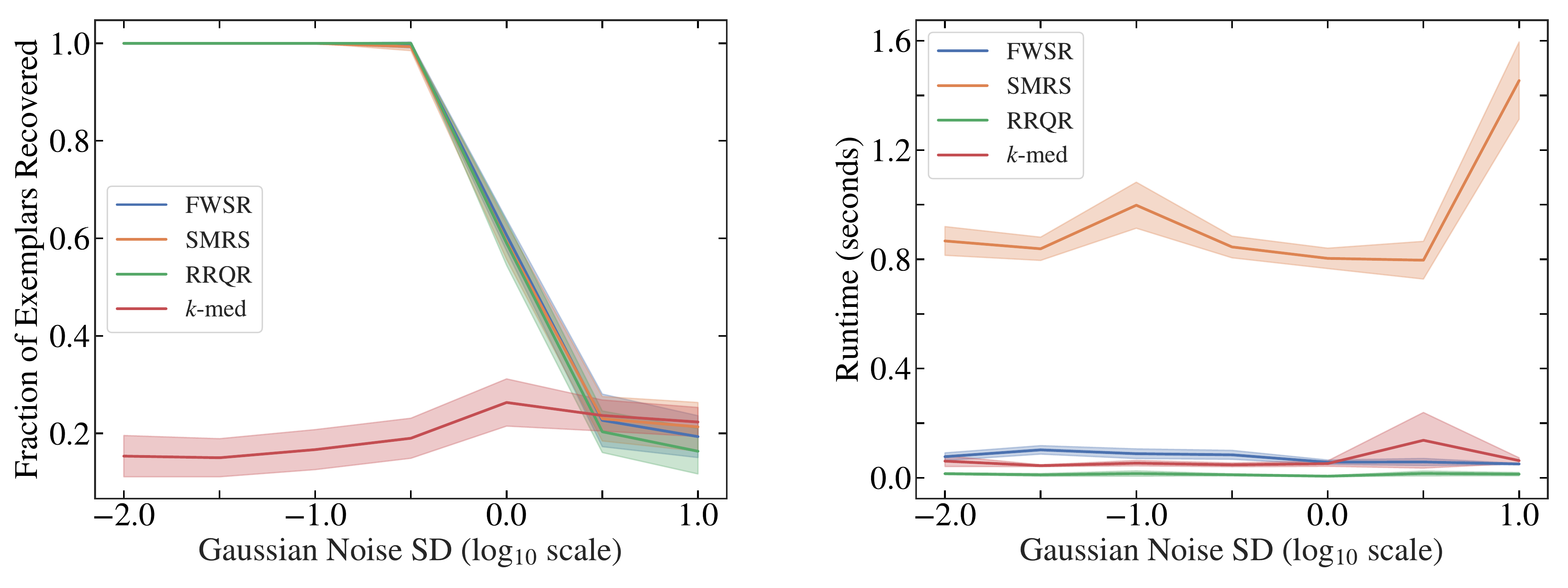}
    \caption{\textbf{Experiment 1} Average fraction of generating exemplars recovered versus level of zero-mean, iid Gaussian noise applied for FWSR, SMRS, RRQR, and $k$-medoids. The shaded regions represent two standard deviations over 10 experiments per noise level. The hyperparameters for FWSR and SMRS are tuned for each level of noise. $k$-medoids, FWSR, and RRQR perform increasingly faster (in that order) and have runtimes on the order of $0.01$s. FWSR, RRQR, and SMRS all perform very similar recovery rate.}
    \label{fig:synthetic_rcc}
\end{figure*}

\subsection{Experiment 2 - Gaussian Clusters}
For this experiment, we generate $1000$ data points and disperse them evenly between $k$ Gaussian clusters in $\mathbb{R}^{1500}$ with covariance $\Sigma = 20^2 I$ using sklearn's \texttt{make\_blob} function. We then use FWSR, SMRS, RRQR, and $k$-medoids to find $k$ exemplars from these $k$ clusters. Figure \ref{fig:synthetic} plots the fraction of the $k$ clusters that were recovered. Without any hyperparameter tuning, we set the sparsity hyperparameter $\alpha = 20$ for SMRS, which is in the range recommended by the authors, and $(\alpha,\eta) = (10,0)$ for FWSR. In Figure \ref{fig:synthetic}, it is clear that with only a few clusters, FWSR is able to recover exemplars from a large percentage of unique clusters compared to the other methods. Although not shown, when the magnitude of the covariance is lowered, both FWSR and RRQR are able to recover the number of clusters with nearly 100\% accuracy while SMRS and $k$-medoids had a recovery rate around 70\%.

% \addtolength{\textheight}{0.5cm}

\begin{figure}[h]
    \includegraphics[width=1\linewidth]{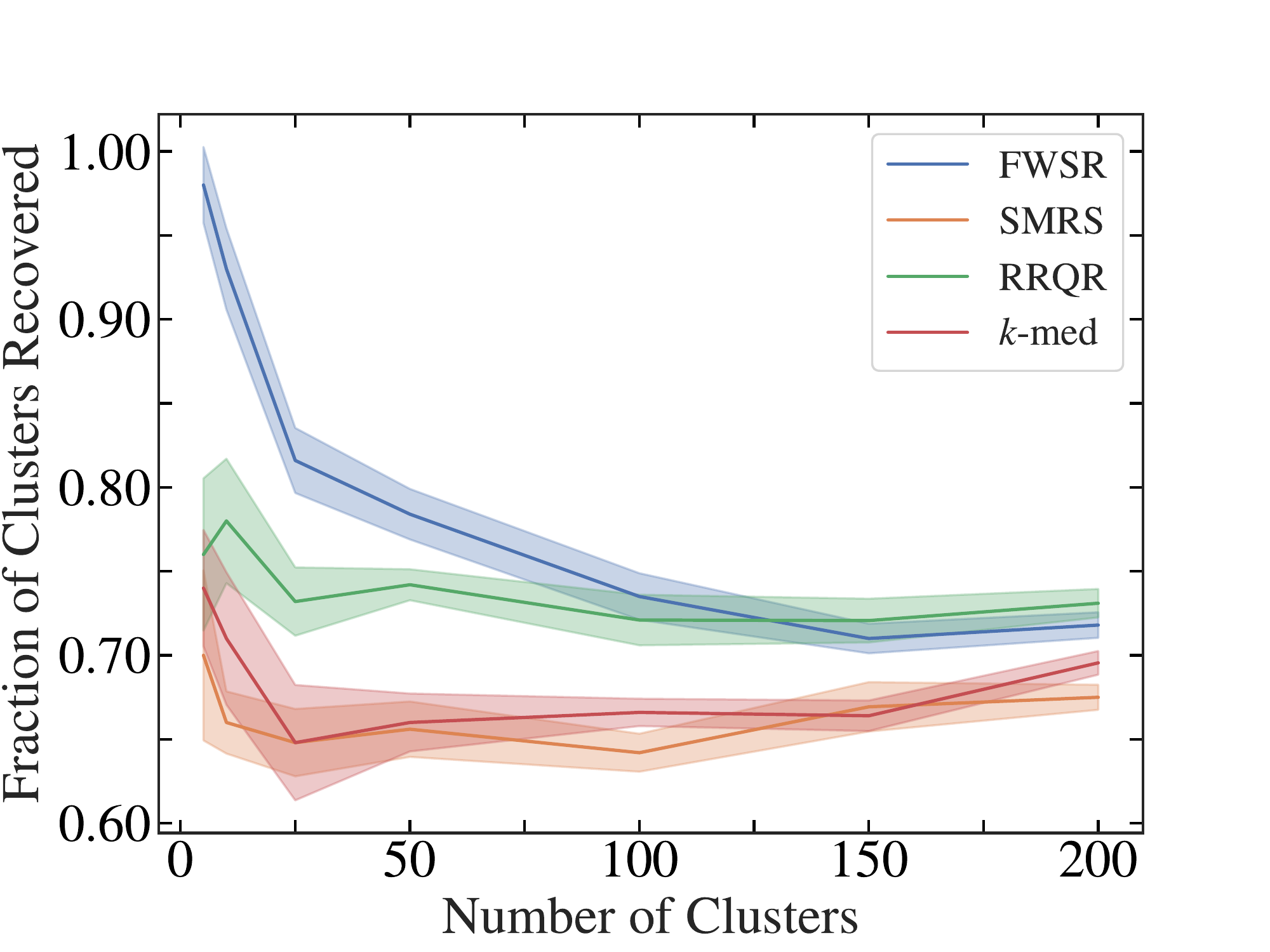}
    \caption{\textbf{Experiment 2} Average fraction of cluster centers recovered versus number of clusters on isotropic Gaussian data for FWSR, SMRS, RRQR, and $k$-medoids. The shaded regions represent one standard deviation over 10 experiments per cluster.}
    \label{fig:synthetic}
\end{figure}

\subsection{Experiment 3 - Labeled Data sets}
Next, we compare FWSR and K-FWSR with a RBF kernel against random subset selection, SMRS, $k$-medoids, and RRQR on downstream tasks. More specifically, for each of the $m$ classes in a labeled training data set, the algorithms select $k$ exemplars per class. These $mk$ exemplars are then used to train a classifier. The exemplar selection algorithms are then compared against one another based on the end-to-end data reduction and training time and validation accuracy. We consider 3 different classifiers: Balanced Linear Support Vector Machines (SVM), $k$-Nearest Neighbors ($k$-NN), and Multinomial Naive Bayes (MNB) all implemented using scikit-learn.  

We cross validate the hyperparameters of the exemplar selection methods and the classifier by comparing validation accuracies for the exemplar-trained classifiers. We repeat this process over (nearly) all combinations of data set, exemplar selection algorithm, and classification model. For non-deterministic methods such as random subset selection and $k$-medoids, we run the exemplar finding algorithm $20$ times and average our results, optimizing hyperparameters for each run.

%XXX is this necessary? XXX The same machine was used for all of the pre-processing methods and classification model training. The code used for each classification model is the same across different data reduction algorithms.

\begin{table}
\centering
\resizebox{\columnwidth}{!}{
\begin{tabular}{ lccccc}
 \toprule
 Data Set&  \# Class & Train $n$ & Valid. $n$ & $d$ & $k/$class\\
 \midrule
 \textsc{E-YaleB} &  $38$& $1,938$ & $476$ & $1,024$ & $7$\\
 \textsc{News20} &  $20$& $11,314$& $7,532$ &$50,000^*$ & $50$  \\ 
 \textsc{Credit} &  $2^\dagger$& $4,394$ &$1,098$ & $29$ & $10$ \\
 \textsc{EMNIST} &  $62$& $253,523$ &$116,323$ & $784$ & $10$\\
\bottomrule
\end{tabular}
}
 \caption{\textbf{Experiment 3} A description of the data sets used. $*$For \textsc{News20} with \textsc{k-NN}, we use \texttt{feature\_selection.chi2} to reduce it to 5000 dimensional data. $\dagger$The \textsc{Credit} fraud class (492) is much smaller than the non-fraud class (5000); hence we use F1-score and only do exemplar selection from non-fraud class. }
\label{tbl:datasets}
\end{table} 

\begin{table*}[ht]
\centering
\scalebox{1}{\begin{tabular}{lcccccccc}
\toprule
&\multicolumn{2}{c}{\textsc{E-YaleB}} &  \multicolumn{3}{c}{\textsc{News20}} & \multicolumn{2}{c}{\textsc{Credit}} &  \multicolumn{1}{c}{\textsc{EMNIST}} \\
\cmidrule(lr){2-3} 
\cmidrule(lr){4-6} 
\cmidrule(lr){7-8}
\cmidrule(lr){9-9}
 & \textsc{SVM} & \textsc{$k$-NN} & \textsc{SVM} & \textsc{$k$-NN} & \textsc{MNB} & \textsc{SVM} & \textsc{$k$-NN} & \textsc{SVM} \\
 \midrule 
\textsc{All} & $0.994$ & $0.773$ & $0.703$ & $0.266$ & $0.703$ & $0.892$ & $0.911$ & $0.639$ \\ \midrule
\textsc{Random} & $0.811$ & $0.412$  & $0.550$ & $0.213$  & $0.541$ & $0.835$ & $0.284$  & $0.344$  \\
\textsc{FWSR} & $0.903$ & $0.473$ & $\mathbf{0.601}$ & $\mathbf{0.343}$ & $\mathbf{0.625}$ & $0.885$ &$ 0.182$ & $\mathbf{0.515}$ \\
\textsc{K-FWSR} & $0.824$  & $\mathbf{0.515}$ & $0.584$ & $0.305$ & $0.618$ & $\mathbf{0.887}$ & $\mathbf{0.876}$ &  $0.400$\\
\textsc{$k$-med} & $0.851$  & $0.480$  & $0.567$  & $0.162$  & $0.566$  & $0.848$ & $0.434$  & $0.461$ \\
\textsc{RRQR} & $\mathbf{0.908}$ & $0.376$ & $0.375$ & $0.313$ & $0.404$ & $0.557$ & $0.165$ & $0.241$ \\
\textsc{SMRS} & $0.876$ & $0.456$ & $0.568$ & $0.274$ & $0.576$ & $0.812$ & $0.167$ & \; -- \\
\bottomrule
\end{tabular}}
\caption{ \textbf{Experiment 3} Accuracies for different exemplar selection algorithms using different training algorithms on 4 different data sets. We select $7$ exemplars/class for E-YaleB, $50$ exemplars/class for New20, $10$ exemplars in the non-fraud class for Credit, and $10$ exemplars/class for EMNIST, corresponding to 13.7\%, 8.8\%, 0.2\%, and 0.2\% of the data sets respectively. SMRS is not capable of running efficiently on the EMNIST data set due to its large size, so we use -- as a placeholder. Bolded numbers in each column denote the best accuracy attained among all exemplar finding algorithms.}
\label{sup_acc}
\end{table*}

\begin{figure*}[ht]
      \centering
      \includegraphics[width=0.8\linewidth]{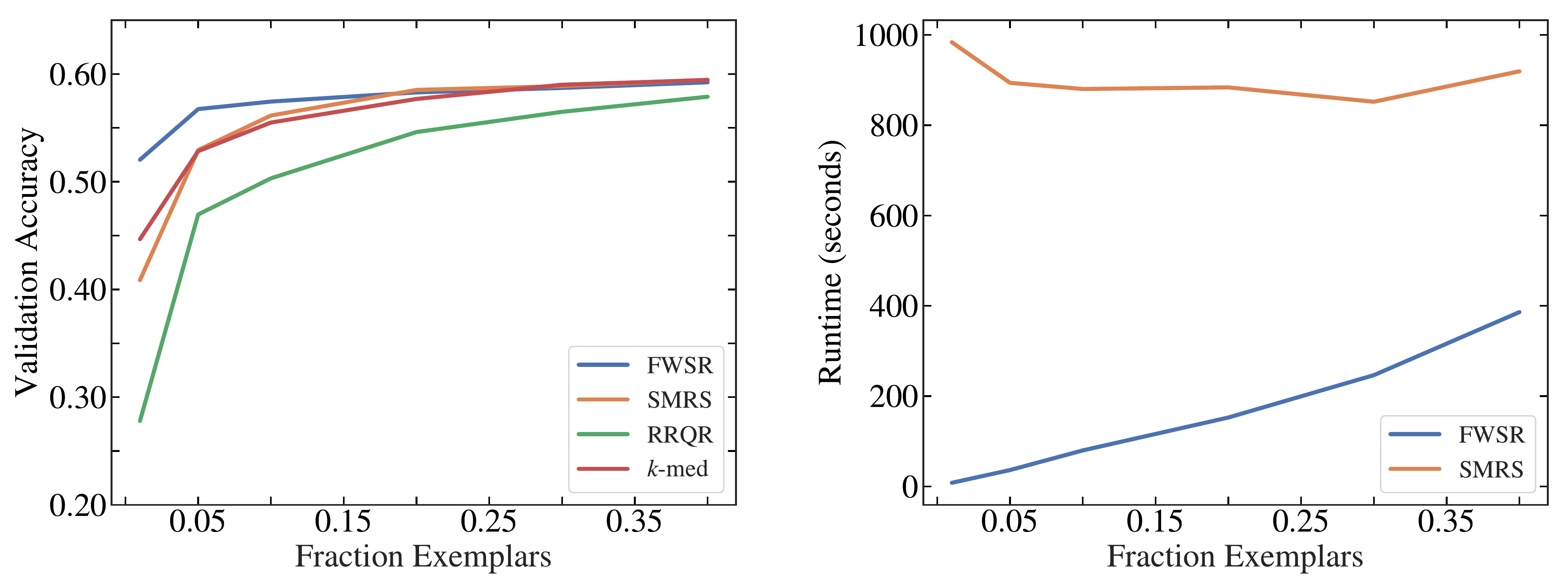}
      \caption{ \textbf{Experiment 3} Validation accuracy and run time versus number of exemplars for EMNIST. The EMNIST data set was subsampled such that each class had at most $1000$ data points so that SMRS could run in a reasonable time. Fraction exemplars denotes the number of exemplars as a percentage of the $1000$ data points in each class. Not displayed: the run time for RRQR and $k$-medoids was $\leq 5$ seconds along the abscissa.}
      \label{speed_graph}
\end{figure*}

% \begin{table*}[t]
% \begin{tabularx}{\linewidth}{lll|lll|ll|l}
% &\multicolumn{2}{c}{E EYALE B} &  \multicolumn{3}{c}{News20} & \multicolumn{2}{c}{Credit (F1 Score)} &  EMNIST Letters   \\
% \hline
%  & SVM & K-NN & SVM & K-NN (5000 dim) & MNB & SVM & K-NN & SVM \\
% All & 0.994 & 0.773 & 0.703 & 0.266 & 0.703 & 0.892 & 0.911 & 0.639 \\
% Random & 0.811 $\pm$ 0.017 & 0.412  $\pm$ 0.019  & 0.550 $\pm$ 0.007 & 0.213 $\pm$ 0.035 & 0.541 $\pm$ 0.007 & 0.835 $\pm$ 0.019 & 0.284 $\pm$ 0.032 & 0.344 $\pm$ 0.014 \\
% FWSR & 0.903 & 0.473 & 0.599 & 0.343 & 0.625 & 0.885 & 0.182 & 0.515 \\
% K-FWSR &  & 0.515 &  & 0.305 & 0.618 &  & 0.876 &  \\
% K-Med & 0.851 $\pm$ 0.012 & 0.480 $\pm$ 0.019 & 0.567 $\pm$ 0.008 & 0.162 $\pm$ 0.013 & 0.566 $\pm$ 0.007 & 0.848 $\pm$ 0.017 & 0.434 $\pm$ 0.027 & 0.461 $\pm$  0.009 \\
% RRQR & 0.908 & 0.376 & 0.375 & 0.313 & 0.404 & 0.557 & 0.165 & 0.241 \\
% SMRS & 0.876 & 0.456 & 0.568 & 0.274 & 0.576 & 0.812 & 0.167 & NA
% \end{tabularx}
% \caption{Accuracies for different exemplar selection algorithms on 4 different data sets and different training algorithms. \textbf{what is FWSR+ and K-FWSR, and explain why N/A, how many exemplars per class per data set}}
% \label{sup_acc}
% \end{table*}

We tested our algorithms on the Extended Yale Face Database B (E-YaleB), 20 Newsgroups (News20), Credit Fraud (Credit), and EMNIST ByClass (EMNIST) datasets; descriptions of each dataset can be found in Table \ref{tbl:datasets}. 
For the Credit Fraud training and validation data sets, we independently center and normalize along features (i.e., along each row of $A$) as a preprocessing step. This was left as a hyperparameter choice for the E-YaleB data set. It was not done for the News20 data set in order to preserve the sparsity in the data, and it was not done for E-MNIST because we empirically observed poor validation set performance.

We display the best cross validated accuracies in Table \ref{sup_acc}. We also display the total time it takes for each algorithm to find the exemplars and train a Linear SVM Model in Table \ref{runtime_table}.
While Table \ref{sup_acc} shows that FWSR is competitive and can outperform the other algorithms in different settings, Table \ref{runtime_table} shows that the algorithm also has a fast end-to-end training time. Note that for all the data sets, SMRS is the slowest algorithm while FWSR strikes a balance. 

In an effort to quantify the effect of $k$ on downstream accuracy, Figure \ref{speed_graph} plots the validation accuracy and run time for the SVM classifier on EMNIST against the number of exemplars selected.
Not only does FWSR outperform the other methods in terms of validation accuracy, but it also shows significant speed-ups compared to SMRS. This is consistent with the results presented in Tables \ref{sup_acc} and \ref{runtime_table}.

\begin{table}[H]
\resizebox{\columnwidth}{!}{
\begin{tabularx}{1.04\linewidth}{lcccc}
\toprule
&\textsc{E-YaleB} & \textsc{News20} & \textsc{Credit} & \textsc{EMNIST}   \\
\midrule
\textsc{All} & $5.42$ & $2.11$ & $0.27$ & $18133.05$ \\
\textsc{FWSR} & $1.29$ & $8.24$ & $7.51$ & $159.48$ \\
\textsc{$k$-med} & $8.21$  & $5.79$  & $0.22$ & $17.39$ \\
\textsc{RRQR} & $3.43$ & $70.16$ & $0.01$& $23.85$ \\
\textsc{SMRS} & $14.80$ & $3331.44$ & $220.56$ & \;\; -- \\
\bottomrule
\end{tabularx}}
\caption{\textbf{Experiment 3} Total reduction time and training time in seconds for an SVM across all the methods. Note that \textsc{All} has no reduction time and simply represents the training time of the SVM on the entire data set. Across 20 trials, the standard deviation of \textsc{$k$-med} was 10.592, 0.097, 0.011, and 0.102 seconds in order from \textsc{E-YaleB} to \textsc{EMNIST}.}
\label{runtime_table}
\end{table}

%On E-YaleB, it is 5 times faster than training on the entire data set and is in fact the fastest data reduction technique.  However, on the Credit data set it is the second slowest but the best performing exemplar finding algorithm. On EMNIST, we see that FWSR is more than 100 times faster than training on the entire data set. While RRQR is 7 times faster than FWSR, its accuracy as shown in Table \ref{sup_acc} is 27\% worse than FWSR. In this regime, it makes sense to employ our algorithm so that the training time for future machine learning models can be significantly reduced in exchange for a one time cost of finding the exemplars.

% Another scheme in which the exemplar selection preprocessing step is useful is in the context of mislabeled data. Given a labeled dataset, a $\ldots$ mislabeling scheme \needcite is when each label has a $p$ probability of being mislabeled. The class that the datapoint is mislabled as is chosen uniformly from the set of all classes, excluding the class that datapoint is originally. Intuitively, these points which are mislabeled are outlier points. Given how our optimization problem is formulated, exemplar selection prioritizes points which are useful in recreating other points; outlier points do not have this property. Empirically, we show this in figure $\ldots$ \fillhere, the validation accuracy as a function of the mislabel probability $p$.

\section{Conclusion}
%Finding exemplars within a training set not only helps summarize large or difficult-to-interpret data sets, but also helps reduce the training time for different types of supervised/unsupervised learning algorithms. 
In this paper, we proposed Franke-Wolfe Sparse Representation, an algorithm for solving the auto-regressive version of dictionary learning that helps identify a subset of the data that efficiently describes the entire data set. We show that our method can be seen as a natural relaxation of the boolean version of the problem and show that using FWSR, we are able to cut down the per iteration cost of state of the art methods by a factor of $d$. Furthermore, we provide a linear convergence rate for our algorithm as well as an interpretable terminating condition. We employ our algorithm on a variety of data sets and show the computational gain as well as its performance against other exemplar finding algorithms.

%Possible extensions of this work include looking at randomized variants of Frank-Wolfe that consider a stochastic algorithm; this could be beneficial to scaling FWSR to even larger data sets. Additionally, it would be interesting to study and propose robust counterparts to the different exemplar selection methods. Another possible extension would be a methodical, online variant of FWSR. In this case, the algorithm would only explicitly need to store the exemplars, greatly reducing the memory requirements, and hence improving the computational complexity.
% We could apply our algorithm with group lasso parameter $q=1$ to the problems addressed in \cite{vidal13}; in this setting, we could use Away Step and Pairwise variants of Frank Wolfe \citep{lacoste2013affine} that achieve a linear rate of convergence independent of $\beta$ and strong convexity of the objective. 

\newpage
\bibliographystyle{plain}
\bibliography{bibtext2}

\clearpage
\appendix
\onecolumn
\gdef\thesection{Appendix \Alph{section}}

{\centering{{\LARGE\bfseries Supplementary material}}}

\section{Linear Convergence Proof} \label{apx:proof}
\subsection{Prerequisites}
We prove linear convergence for a not strongly convex $f(X)$ (i.e., when $A$ is not full column rank) for $\beta > n$. For this appendix, (just like the main paper) we implicitely augment a row $\eta \textbf{1}^\top$ to $A$ to accommodate the translational invariance penalty. We begin this proof by redefining some variables:
\begin{align*}
\hat{A} &\coloneqq \begin{bmatrix}
              A & & 0\\
              & \ddots &\\
              0 & & A
              \end{bmatrix} 
&w_t &\coloneqq \begin{bmatrix}
              X^{(1)T}_t\\
              \vdots\\
              X^{(n)T}_t
              \end{bmatrix}\\
\hat{K} &\coloneqq \hat{A}^\top \hat{A}
& \hat{1}_i &\coloneqq \begin{cases}
              1 & \text{if } i \mod (d+1) = 0\\
              0 & \text{o.w.}
              \end{cases}\\
s_t &\coloneqq \arg \min_s \langle \nabla f, s\rangle &d_t &\coloneqq s_t - w_t\\
 &= \arg \max_s \langle \hA s - \hA w, \hA \hat{1} - \hA w\rangle & &\\
 &= \arg \max_s (s- w)^\top \hK(\hat{1} - w) &&
\end{align*}
Observe that our objective and gradient using this notation can be rewritten as:
    \begin{align*}
      f(w_t) &\coloneqq \left\|AX_t - A\right\|_{F}^2 \\
      &= \|\hat{A} w_t - \hat{A} \hat{1}\|_{2}^2\\
      \nabla f(w_t) &\coloneqq  2\hat{K}(w- \hat{1}) 
    \end{align*}

The next cost as a function of the previous cost is:
\begin{align}
f(w_{t+1}) &=  f(w_{t} + \gamma d_t)\\
&= (w_t + \gamma d_t - \hat{1})K (w_t + \gamma d_t - \hat{1}) \\
&= f(w_t) + \gamma^2 d_t^T K d_t + 2\gamma (w_t - \hat{1})^T K d_t \label{nextstepcost}\\
\frac{\partial f}{\partial \gamma} &= 2\gamma d_t^T K d_t + 2 (w_t - \hat{1}) K d_t = 0\\
\gamma_t &= \frac{d_t^T K (\hat{1} - w_t)}{d_t^T K d_t}\\
f(w_{t+1}) &= f(w_t) - \frac{(d_t^T K (1 - w_t))^2}{d_t^T K d_t} \label{nextstepcostV2}
\end{align}
The last line is true due to lemma \ref{stepsize} which is introduced below.

We will also be using the following helpful lemmas:
\begin{lemma}
\label{stepsize}
For $\beta > n$, the optimal step size satisfies: $0 \leq \gamma_t \leq 1$ for all $t$
\end{lemma}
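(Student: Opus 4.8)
The plan is to control the exact line-search step size
\[
\gamma_t = \frac{d_t^\top \hat{K}(\hat{1} - w_t)}{d_t^\top \hat{K} d_t}
\]
by bounding it below by $0$ and above by $1$ separately, carrying a simultaneous induction that each iterate $w_t$ is feasible for \eqref{eq:opt} (the base case $w_0 = 0$ is feasible, and once $\gamma_t \in [0,1]$ is shown, $w_{t+1} = (1-\gamma_t)w_t + \gamma_t s_t$ is a convex combination of feasible points, hence feasible). The single structural fact driving both bounds is that $\beta > n$ places $\hat{1}$, the vectorization of the identity matrix $X = I$, inside the feasible set: since $\sum_{i=1}^n \|e_i\|_q = n \leq \beta$, the matrix $X = I$ satisfies the group-lasso constraint. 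I will also use throughout that $\hat{K} = \hat{A}^\top \hat{A} \succeq 0$, so in particular $d_t^\top \hat{K} d_t = \|\hat{A} d_t\|_2^2 \geq 0$.

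For the lower bound $\gamma_t \geq 0$ I would show the numerator is nonnegative. Writing $d_t = s_t - w_t$ and recalling $s_t = \arg\max_s (s - w_t)^\top \hat{K}(\hat{1} - w_t)$ over the feasible set, the inductive feasibility of $w_t$ makes $s = w_t$ an admissible competitor, so
\[
d_t^\top \hat{K}(\hat{1} - w_t) = (s_t - w_t)^\top \hat{K}(\hat{1} - w_t) \geq (w_t - w_t)^\top \hat{K}(\hat{1} - w_t) = 0 .
\]
Together with the nonnegative denominator this gives $\gamma_t \geq 0$.

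The upper bound $\gamma_t \leq 1$ is the main obstacle, and I would handle it via convexity of $h(\gamma) := f(w_t + \gamma d_t)$. Since $h$ is a convex quadratic whose unconstrained minimizer is $\gamma_t$, the inequality $\gamma_t \leq 1$ is equivalent to $h'(1) \geq 0$, i.e. (using $w_t + d_t = s_t$) to $(s_t - w_t)^\top \hat{K}(\hat{1} - s_t) \leq 0$. I would prove this through the decomposition
\[
(s_t - w_t)^\top \hat{K}(\hat{1} - s_t) = -\,\|\hat{A}(s_t - \hat{1})\|_2^2 + (\hat{1} - w_t)^\top \hat{K}(\hat{1} - s_t) ,
\]
obtained by splitting $s_t - w_t = (s_t - \hat{1}) + (\hat{1} - w_t)$. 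The first term is nonpositive because $\hat{K} \succeq 0$. The second term is nonpositive \emph{exactly} because $\hat{1}$ is feasible: the optimality of $s_t$ in the LMO yields $s_t^\top \hat{K}(\hat{1} - w_t) \geq \hat{1}^\top \hat{K}(\hat{1} - w_t)$, hence $(\hat{1} - s_t)^\top \hat{K}(\hat{1} - w_t) \leq 0$, which equals the second term by symmetry of $\hat{K}$. Summing two nonpositive terms closes the argument.

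The only remaining point is the degenerate case $d_t^\top \hat{K} d_t = 0$, where $\hat{A} d_t = 0$ forces the numerator to vanish as well and $f$ is flat along $d_t$; here any $\gamma \in [0,1]$ is optimal and the claim holds trivially, consistent with the $\min(1,\cdot)$ clipping in Algorithm \ref{sub1}. I expect the conceptual crux of the whole lemma to be the recognition that $\beta > n$ is precisely the condition making $\hat{1}$ feasible, since that feasibility is what renders the second term of the decomposition nonpositive and thereby certifies $\gamma_t \leq 1$.
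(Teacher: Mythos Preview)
Your proof is correct and follows essentially the same route as the paper: both arguments hinge on the single observation that $\beta > n$ makes $\hat{1}$ feasible, and then use LMO optimality with $\hat{1}$ as competitor to force $(\hat{1}-s_t)^\top \hat{K}(\hat{1}-w_t)\le 0$, which is exactly the inequality needed for $\gamma_t\le 1$. The only cosmetic differences are that the paper phrases both bounds by contradiction (and uses $\hat{1}$ rather than $w_t$ as competitor for the lower bound, avoiding your inductive feasibility step), whereas you argue directly via $h'(1)\ge 0$ and an explicit two-term decomposition; both packagings unwind to the same inequality.
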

\begin{proof}
Suppose $\gamma_t < 0$, this implies that $(s_t - w_t)^T \hat{K} (\hat{1} - w_t) < 0$, but because $s$ maximizes the quantity (fact 1), it must be that:
\begin{gather*}
0 >(s_t - w_t)^T \hat{K} (\hat{1} - w_t) \geq (\hat{1} - w_t)^T \hat{K} (\hat{1} - w_t) \geq 0
\end{gather*}
which is a contradiction.

Suppose that $\gamma_t > 1$, this implies that $(s_t - w_t)^T \hat{K} (\hat{1} - w_t) > d^T_t \hat{K} d^T_t$ (fact 2). However, in using $\gamma = 1$ in 
equation \eqref{nextstepcost}:
\begin{align*}
0 &\leq f(w_{t+1}) = f(w_t) + 2 d_t^T \hK (w_t - \hat{1}) + d_t^T \hat{K} d_t\\
&< f(w_t) + d_t^T \hK (w_t - \hat {1})\\
&= (\hat{1} - w_t)^T \hK (\hat{1} - w_t) + d_t^T \hK (w_t - \hat{1}) \leq 0
\end{align*}
where the first inequality comes from fact 2, and the second inequality comes from fact 1. This is also a contradiction.
\end{proof}

\begin{lemma}
\label{relint}
For $\beta > n$, $\hA \hat{1}$ is in the relative interior of $\{\hA u \: : \: u \in \mathcal{M}_{q, \beta}\}$. Furthermore, there exists $r > 0$ such that \[\hA w + (\|\hA \hat{1} - \hA w\|_{2} + r) \frac{\hA \hat{1} - \hA w}{\|\hA \hat{1} - \hA w\|_{2}}\] for all $w$ is in the interior of the domain as well
\end{lemma}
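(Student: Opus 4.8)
The plan is to reduce both claims to a single geometric fact: that $\hA\hat{1}$ lies in the relative interior of the image set $S \coloneqq \{\hA u : u \in \mathcal{M}_{q,\beta}\}$. Once this is established, the ``overshoot'' statement will follow almost immediately from the definition of relative interior, the crucial point being that a single radius $r$ can be chosen independent of $w$.

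First I would prove the relative-interior claim. The key observation is that $\hat{1}$ is the vectorization of the identity matrix $I_n$, whose group lasso norm is
\[
\sum_{i=1}^n \|I^{(i)}\|_q = \sum_{i=1}^n \|e_i\|_q = n.
\]
Since $\beta > n$ by hypothesis, $\hat{1}$ satisfies the group lasso constraint strictly, i.e. $\hat{1} \in \mathrm{int}(\mathcal{M}_{q,\beta})$; here the interior is the absolute interior, which is nonempty because the group lasso penalty is a genuine norm and hence $\mathcal{M}_{q,\beta}$ is full-dimensional in $\mathbb{R}^{n^2}$. I would then invoke the standard convex-analysis fact that a linear map sends the relative interior of a convex set onto the relative interior of its image: $\hA(\mathrm{relint}\,\mathcal{M}_{q,\beta}) = \mathrm{relint}(S)$. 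Because $\mathcal{M}_{q,\beta}$ is full-dimensional, $\mathrm{relint}\,\mathcal{M}_{q,\beta} = \mathrm{int}\,\mathcal{M}_{q,\beta} \ni \hat{1}$, so $\hA\hat{1} \in \mathrm{relint}(S)$, proving the first claim. Note that I work with the \emph{relative} interior throughout, since $\hA$ may be rank-deficient (exactly the non-strongly-convex regime this appendix targets) and $S$ need not be full-dimensional in the ambient $\mathbb{R}^{n(d+1)}$.

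Next I would establish the overshoot statement. Since $\hA\hat{1} \in \mathrm{relint}(S)$ and the relative interior is relatively open, there is $r > 0$ such that every point of $\mathrm{aff}(S)$ within distance $r$ of $\hA\hat{1}$ lies in $\mathrm{relint}(S)$. The displayed point equals $\hA\hat{1} + r\,\hat{u}$, where $\hat{u} \coloneqq (\hA\hat{1} - \hA w)/\|\hA\hat{1} - \hA w\|_2$ is a unit vector: moving from $\hA w$ toward $\hA\hat{1}$ a distance $\|\hA\hat{1}-\hA w\|_2$ lands exactly on $\hA\hat{1}$, and the extra $r$ overshoots by $r$ along $\hat{u}$. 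Because both $\hA\hat{1}$ and $\hA w$ belong to $S$, their difference, and hence $\hat{u}$, lies in the linear subspace parallel to $\mathrm{aff}(S)$; therefore $\hA\hat{1} + r\hat{u} \in \mathrm{aff}(S)$ and is within distance $r$ of $\hA\hat{1}$, so it lies in $\mathrm{relint}(S)$.

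The main obstacle, and the reason part two is stated separately, is \emph{uniformity} in $w$: the radius $r$ must not depend on $w$. This is secured because $r$ is determined solely by the relative neighborhood of the fixed point $\hA\hat{1}$ inside $S$, while the only $w$-dependence enters through the direction $\hat{u}$, which the argument above confines to the parallel subspace of $\mathrm{aff}(S)$ so that the same relative ball absorbs every overshoot. The remaining care is purely bookkeeping: phrasing everything in terms of $\mathrm{relint}(S)$ and $\mathrm{aff}(S)$ rather than the absolute topology, since rank-deficiency of $\hA$ makes $S$ lower-dimensional.
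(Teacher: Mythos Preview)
Your proposal is correct and follows essentially the same line as the paper's proof: both argue that $\hat{1}$ (the vectorization of $I_n$) lies strictly inside $\mathcal{M}_{q,\beta}$ because its group-lasso norm is $n<\beta$, then push this through the linear map $\hA$, and finally rewrite the overshoot point as $\hA\hat{1}+r\,\hat{u}$ and use the (relative) ball around $\hA\hat{1}$. The only cosmetic difference is that the paper names the ``Open Mapping Theorem'' for the image-of-interior step, whereas you cite the convex-analysis fact that linear images preserve relative interiors; you are also more explicit about working in $\mathrm{aff}(S)$ and about the uniformity of $r$ in $w$, points the paper leaves implicit.
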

\begin{proof}
The Open Mapping Theorem proves the first point because $\hat{A}$ is by definition surjective to the space spanned by $\hat{A}$ and by construction there exists an open set around $\hat 1$ in the $\mathcal{M}_{q, \beta}$. Given that the first point is true, then the second point arises from the fact that 
\begin{gather*}
\hA w + (\|\hA \hat{1} - \hA w\|_{2} + r) \frac{\hA \hat{1} - \hA w}{\|\hA \hat{1} - \hA w\|_{2}} = \hA \hat{1}  + r \frac{\hA \hat{1} - \hA w}{\|\hA \hat{1} - \hA w\|_{2}} 
\end{gather*}
Since, there must exist an open ball around $\hA \hat{1}$, there must exist an $r$ such the above is true.
\end{proof}

\begin{lemma}
\label{A6}
The logistic equation
\begin{gather*}
x_{n+1} = \alpha x_n (1 - x_n)
\end{gather*}
for $x_0, \alpha \in [0,1]$ satisfies
\begin{gather*}
\forall n\in \mathbb{N},\ x_n \leq \frac{x_0}{\alpha^{-n} + x_0 n}
\end{gather*}
\end{lemma}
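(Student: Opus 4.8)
The plan is to prove the bound by induction on $n$ after passing to reciprocals, which converts the nonlinear logistic recurrence into a linear inequality that is easy to close. First I would record the basic invariants: since $x(1-x) \in [0, 1/4]$ on $[0,1]$ and $\alpha \le 1$, an easy induction shows $x_n \in [0,1]$ for every $n$, and in fact $x_n \le \alpha/4 \le 1/4 < 1$ for all $n \ge 1$. The degenerate cases are dispatched immediately: if $x_0 = 0$, or more generally if $x_{n_0} = 0$ for some $n_0$, then $x_n = 0$ for all $n \ge n_0$ and the claimed inequality holds trivially there since its right-hand side is nonnegative; and $\alpha = 0$ is handled once $\alpha^{-n}$ is interpreted. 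Hence I may assume $x_n \in (0,1)$ throughout and $\alpha \in (0,1]$, which is exactly what is needed to take reciprocals.

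The key step is the substitution $z_n := 1/x_n$. Using the elementary inequality $\frac{1}{1-x} \ge 1 + x$, valid for $x \in [0,1)$, the recurrence $x_{n+1} = \alpha x_n(1 - x_n)$ yields
\[
z_{n+1} = \frac{1}{\alpha x_n (1 - x_n)} \ge \frac{1 + x_n}{\alpha x_n} = \frac{1}{\alpha}\left(z_n + 1\right),
\]
so the reciprocals obey the clean super-solution inequality $z_{n+1} \ge \alpha^{-1}(z_n + 1)$.

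Next I would introduce the target sequence $w_n := \alpha^{-n}/x_0 + n$, which is precisely the reciprocal of the claimed upper bound, and verify that it is a sub-solution of the same recurrence. One has $w_0 = 1/x_0 = z_0$, and a direct computation gives $\alpha^{-1}(w_n + 1) = \alpha^{-(n+1)}/x_0 + (n+1)/\alpha \ge \alpha^{-(n+1)}/x_0 + (n+1) = w_{n+1}$, where the single inequality uses $1/\alpha \ge 1$. Combining the two facts, the induction closes at once: assuming $z_n \ge w_n$, monotonicity of $t \mapsto \alpha^{-1}(t+1)$ gives $z_{n+1} \ge \alpha^{-1}(z_n + 1) \ge \alpha^{-1}(w_n + 1) \ge w_{n+1}$. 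Inverting $z_n \ge w_n$ returns exactly $x_n \le x_0/(\alpha^{-n} + x_0 n)$, the assertion.

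The main obstacle is conceptual rather than computational: a naive induction that bounds $1 - x_n \le 1$ discards the damping factor and fails (it would reduce to requiring $n(1-\alpha) \ge \alpha$, false for $\alpha$ near $1$), so the whole argument hinges on retaining the $(1 - x_n)$ term through the reciprocal trick together with $\frac{1}{1-x} \ge 1 + x$. Once that observation is in place, the remaining work — checking the two recurrence inequalities and the base case — is routine.
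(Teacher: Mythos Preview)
Your argument is correct. The reciprocal substitution $z_n = 1/x_n$ together with the elementary bound $\tfrac{1}{1-x} \ge 1+x$ yields the linear supersolution inequality $z_{n+1} \ge \alpha^{-1}(z_n+1)$, and your comparison sequence $w_n = \alpha^{-n}/x_0 + n$ is easily checked to be a subsolution of the same recursion with matching base value; the induction then closes exactly as you describe. The edge cases ($x_0\in\{0,1\}$, $\alpha=0$) are handled adequately.

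As for comparison: the paper does not actually supply a proof of this lemma at all---it simply cites Lemma~A.6 of \citep{campbell17} and refers the reader to the appendix of that work. Your write-up therefore furnishes a self-contained argument where the paper offers only a pointer. The reciprocal trick you use is the natural and standard device for turning a logistic-type recurrence into a tractable linear inequality, so in spirit your approach is almost certainly the same as in the cited reference; your added remark explaining \emph{why} the naive bound $1-x_n\le 1$ fails (it loses precisely the term that produces the $x_0 n$ contribution in the denominator) is a nice pedagogical touch that the paper omits.
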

\begin{proof}
This is Lemma A.6 from \citep{campbell17}. The proof of this lemma can be found in Appendix A of the aforementioned paper.
\end{proof}

\subsection{Main Proof}
Starting with equation \eqref{nextstepcostV2}:
\begin{align*}
f(w_{t+1}) &= f(w_t) - \frac{(d_t^T \hK (\hat{1} - w_t))^2}{d_t^T \hK d_t}\\
&= f(w_t)\left( 1 - \frac{1}{(w_t - \hat{1})^T\hK (w_t - \hat{1})} \frac{(d_t^T \hK (\hat{1} - w_t))^2}{d_t^T \hK d_t  } \right)\\
&= f(w_t)\left( 1 - \left[\frac{(\hA s_t - \hA w_t)^T (\hA \hat{1} - \hA w_t)}{\|\hA s_t-  \hA w_t\|_{2} \| \hA \hat{1} - \hA w_t\|_{2}}\right]^2 \right)
\end{align*}

Furthermore, observe that because $s_t = \arg \max_s \langle \hA s - \hA w, \hA \hat{1} - Aw\rangle$, replacing $\hA s$ with any other point is a lower bound, so using equation A.57, for some $r > 0$ we can replace $As$ with $\hA w + (\|\hA \hat{1} - \hA w\|_{2} + r) \frac{\hA \hat{1} - \hA w}{\|\hA \hat{1} - \hA w\|_{2}}$ using lemma \ref{relint}. Notice that this vector is in the range of $\hA$.
\begin{align*}
\left(\frac{\hA s_t - \hA w_t }{\|\hA s_t-  \hA w_t\|_{2} } \right)^T\frac{(\hA \hat{1} - \hA w_t)}{\| \hA \hat{1} - \hA w_t\|_{2}}
  &\geq \left(\frac{(\|\hA \hat{1} - \hA w\|_{2} + r) \frac{\hA \hat{1} - \hA w}{\|\hA \hat{1} - \hA w\|_{2}}}{\|\hA s-  \hA w\|_{2}}\right)^T \frac{\hA \hat{1} - \hA w}{\| \hA \hat{1} - \hA w\|_{2}} \\
&= \frac{\|\hA \hat{1} - \hA w\|_{2} + r}{\|\hA s-  \hA w\|_{2}} 
= \frac{\sqrt{f(w_t)} + r}{\|\hA s-  \hA w\|_{2}}
\geq \frac{\sqrt{f(w_t)} + r}{2\beta \sqrt{\lambda_{\max}(\hK)}} \\
&\geq \frac{\sqrt{f(w_t)} + r}{C 2\beta \sqrt{\lambda_{\max}(\hK)}}\\
\end{align*}
for some $C > 1$. This implies that:
\begin{align*}
f(w_{t+1}) &\leq f(w_t)\left( 1 -  \left(\frac{\sqrt{f(w_t)} + r}{2\beta C \sqrt{\lambda_{\max}(\hK)}}\right)^2\right)\\
&\leq f(w_t)\left( 1 -  \frac{f(w_t)}{4\beta^2 C^2 \lambda_{\max}(\hK)} - \frac{r^2}{4\beta^2 C^2 \lambda_{\max}(\hK)}\right)\\
&= f(w_t)\left( \nu^2 -  \frac{f(w_t)}{4\beta^2 C^2 \lambda_{\max}(\hK)} \right)
\end{align*}
where $\nu^2 \coloneqq  1 - \frac{r^2}{4\beta^2 C^2 \lambda_{\max}(\hK)}$

With this relationship we can derive the critical recursive relationship:
\begin{align}
f(w_{t+1}) &\leq f(w_t)\nu^2 \left(1 -  \frac{f(w_t)}{4\beta^2 C^2\lambda_{\max}(K)\nu^2 } \right)\\
%\frac{f(w_{t+1})}{4\beta^2 C^2\lambda_{\max}(K)\nu^2 } &\leq \frac{f(w_t)}{4\beta^2C^2 \lambda_{\max}(K)\nu^2 }\nu^2 \left(1 -  \frac{f(w_t)}{4\beta^2 C^2\lambda_{\max}(K)\nu^2 } \right)\\
x_{t+1}&\leq x_t\nu^2 \left(1 -  x_t\right) \\
x_t &\coloneqq  \frac{f(w_t)}{4\beta^2C^2 \lambda_{\max}(\hK)\nu^2 }\label{recurrance}
\end{align}

\begin{claim}
\begin{gather*}
0 \leq \nu^2 = 1 - \frac{r^2}{4 \beta^2 C^2\lambda_{\max}(\hK)} < 1
\end{gather*}
\end{claim}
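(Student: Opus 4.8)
The plan is to verify the two inequalities separately, the crucial observation being that $C$ is a free parameter at our disposal rather than a fixed given constant. The upper bound is immediate and the lower bound amounts to making the choice of $C$ explicit.

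For the upper bound $\nu^2 < 1$: by Lemma \ref{relint} we have $r > 0$, and $\beta > n > 0$, $C > 0$, while $\lambda_{\max}(\hK) > 0$ since $\hK = \hat{A}^\top \hat{A}$ is nonzero whenever the data matrix $A$ is nonzero. Therefore the subtracted term $\frac{r^2}{4\beta^2 C^2 \lambda_{\max}(\hK)}$ is strictly positive, so $\nu^2 = 1 - (\text{strictly positive}) < 1$ with no further work.

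For the lower bound $\nu^2 \geq 0$: this is equivalent to $\frac{r^2}{4\beta^2 C^2 \lambda_{\max}(\hK)} \leq 1$, i.e. to $C \geq \frac{r}{2\beta\sqrt{\lambda_{\max}(\hK)}}$. Recall that in the main proof $C$ entered only through the trivial bound $\frac{\sqrt{f(w_t)} + r}{2\beta\sqrt{\lambda_{\max}(\hK)}} \geq \frac{\sqrt{f(w_t)} + r}{C\, 2\beta\sqrt{\lambda_{\max}(\hK)}}$, which holds for every $C \geq 1$. Hence I am free to fix $C := \max\bigl\{2,\ \tfrac{r}{2\beta\sqrt{\lambda_{\max}(\hK)}}+1\bigr\}$, which simultaneously guarantees $C > 1$ (so the earlier inequality remains valid) and $C > \frac{r}{2\beta\sqrt{\lambda_{\max}(\hK)}}$ (so the subtracted term is strictly less than $1$), giving $\nu^2 > 0 \geq 0$.

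The main subtlety — and it is a subtlety rather than a genuine obstacle — is to recognize that $C$ was introduced precisely to absorb the offset $r$, so the claim reduces to pinning down the choice of $C$ and checking that enlarging it does not invalidate any earlier step. Since increasing $C$ only loosens the bound $\frac{\sqrt{f(w_t)}+r}{2\beta\sqrt{\lambda_{\max}(\hK)}} \geq \frac{\sqrt{f(w_t)}+r}{C\,2\beta\sqrt{\lambda_{\max}(\hK)}}$ in the direction already exploited, consistency with the derivation is preserved and the recursion \eqref{recurrance} is unaffected.
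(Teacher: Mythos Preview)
Your proof is correct and rests on the same idea as the paper's---exploit a free parameter to drive the subtracted fraction below $1$---but you enlarge $C$ whereas the paper instead shrinks $r$ (Lemma~\ref{relint} only guarantees existence of \emph{some} $r>0$, so any smaller positive value still works). Both choices are valid; the paper's leaves $C$ unconstrained for the separate ``sufficiently large $C$'' needed in the next claim, while under your approach one would simply take $C$ large enough to satisfy both requirements simultaneously, which your closing remark about increasing $C$ already anticipates.
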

\begin{proof}
Recall that $r$ is the magnitude of change in the direction of $\frac{\hA \hat{1} - \hA w}{\|\hA \hat{1} - \hA w\|_{2}}$. So this means we can always pick $r$ small enough such that the numerator of $\frac{r^2}{4 \beta^2 C^2\lambda_{\max}(\hK)}$ is smaller than the denominator and that the new point that we chose is still in the interior of the domain; thereby making the claim true. 
\end{proof}

\begin{claim}
For a sufficiently large $C$,
\begin{gather*}
0 \leq x_t \leq 1
\end{gather*}
\end{claim}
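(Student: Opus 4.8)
The plan is to prove the two bounds separately, letting the choice of $C$ enter only through the base case $x_0$. The lower bound is essentially free: since $f(w_t) = \|\hA w_t - \hA \hat{1}\|_{2}^2 \geq 0$ and, by substituting the definition of $\nu^2$, the normalizing constant satisfies the exact identity $4\beta^2 C^2 \lambda_{\max}(\hK)\nu^2 = 4\beta^2 C^2 \lambda_{\max}(\hK) - r^2$, which is strictly positive for $C$ large enough that $\nu^2 > 0$ (compatible with the previous claim, since $\nu^2 \to 1$ as $C\to\infty$), we immediately get $x_t \geq 0$ for every $t$.

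For the upper bound, the key observation I would use is that $f$ is non-increasing along the iterates. Equation \eqref{nextstepcostV2} gives
\begin{align*}
f(w_{t+1}) = f(w_t) - \frac{(d_t^\top \hK (\hat{1} - w_t))^2}{d_t^\top \hK d_t} \leq f(w_t),
\end{align*}
because $d_t^\top \hK d_t = \|\hA d_t\|_{2}^2 \geq 0$ and the numerator is a square. By induction $f(w_t) \leq f(w_0)$ for all $t$, and since the normalizing constant in \eqref{recurrance} is independent of $t$, this yields $x_t \leq x_0$. It therefore suffices to force $x_0 \leq 1$.

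To that end I would compute $x_0$ explicitly. The algorithm initializes $X_0 = 0$, so $w_0 = 0$ and $f(w_0) = \|\hA \hat{1}\|_{2}^2$ is a fixed constant that does not depend on $C$. Using the denominator simplification above,
\begin{align*}
x_0 = \frac{f(w_0)}{4\beta^2 C^2 \lambda_{\max}(\hK)\nu^2} = \frac{\|\hA \hat{1}\|_{2}^2}{4\beta^2 C^2 \lambda_{\max}(\hK) - r^2},
\end{align*}
which tends to $0$ as $C \to \infty$. Concretely, taking $C \geq \sqrt{(\|\hA \hat{1}\|_{2}^2 + r^2)/(4\beta^2 \lambda_{\max}(\hK))}$, together with $C > 1$ as required earlier in the proof, makes $x_0 \leq 1$ and hence $x_t \leq 1$ for all $t$.

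The only genuinely delicate point is the apparent circularity that $\nu^2$ itself depends on $C$; the plan disposes of this through the cancellation $4\beta^2 C^2 \lambda_{\max}(\hK)\nu^2 = 4\beta^2 C^2 \lambda_{\max}(\hK) - r^2$, after which the denominator's growth in $C$ is transparent. As an alternative to the monotonicity argument, one could induct directly on the recurrence \eqref{recurrance}: if $x_t \in [0,1]$ then $x_t(1-x_t) \leq 1/4$ and $\nu^2 < 1$ give $x_{t+1} \leq \nu^2/4 < 1$, which again reduces the whole statement to checking the base case $x_0 \leq 1$. Either route leaves the main obstacle as the same elementary bookkeeping of choosing $C$ large enough.
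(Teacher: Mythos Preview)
Your argument is correct. Both the monotonicity route and the direct induction on the recurrence are valid, and your handling of the $C$--$\nu$ dependence via the identity $4\beta^2 C^2 \lambda_{\max}(\hK)\nu^2 = 4\beta^2 C^2 \lambda_{\max}(\hK) - r^2$ is clean.

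The paper, however, does not go through monotonicity or the specific initialization. It bounds $x_t$ uniformly in $t$ by observing that for any feasible $w$,
\[
\|\hA(w-\hat{1})\|_2^2 \;\leq\; \lambda_{\max}(\hK)\,\|w-\hat{1}\|_2^2 \;\leq\; 4\beta^2\,\lambda_{\max}(\hK),
\]
the last step using that both $w$ and $\hat{1}$ lie in the group-lasso ball of radius $\beta$ (with $\beta>n$). This gives $x_t \leq 1/(C^2\nu^2)$ directly, and then $C$ is taken large. Your approach instead exploits the descent property $f(w_{t+1})\leq f(w_t)$ (immediate from exact line search) together with $w_0=0$, reducing everything to the single base-case estimate $x_0\leq 1$. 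The paper's route is initialization-agnostic and slightly shorter; yours is arguably more elementary because it avoids bounding $\|w-\hat{1}\|_2$ over the feasible set, and it also yields a sharper constant since $\|\hA\hat{1}\|_2^2$ can be much smaller than $4\beta^2\lambda_{\max}(\hK)$.
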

\begin{proof}
$x_t$ is non-negative because all components of the fraction are nonnegative.
\begin{align*}
\frac{f(w_{t+1})}{4\beta^2 C^2\lambda_{\max}(\hK)\nu^2 } &= \frac{\|\hA (w - \hat{1})\|_{2}^2}{4 \beta^2 C^2 \lambda_{\max}(\hK)\nu^2 }
\leq \frac{1}{C^2 \nu^2} 
\end{align*}
We can always pick a $C$ large enough such that the quantity is less than $1$.
\end{proof}
% \begin{remark}
% The only thing that remains proving is that $0 \leq x_t \leq 1$. On page 10 of the Campbell paper, it is stated that $\nu < 1$; the proof is omitted. Furthermore, it is unclear why this means that $0 \leq x_t \leq 1$. We may need to refer to \href{https://link.springer.com/article/10.1007/BF01589445}{Guelat and Marcotte} to remove this wrinkle.
% \end{remark}

Then finally in using lemma \ref{A6} with $x_t$ from \eqref{recurrance}, the proof is complete. The convergence rate is:
\begin{gather*}
x_t \leq \frac{x_0}{\nu^{-2t} + x_0 t}\\
\\
\frac{f(w_{t})}{4\beta^2C^2 \lambda_{\max}(\hK)\nu^2 } \leq \frac{\frac{f(w_{0})}{4\beta^2 C^2 \lambda_{\max}(\hK)\nu^2 }}{\nu^{-2t} + \frac{f(w_{0})}{4\beta^2C^2 \lambda_{\max}(\hK)\nu^2 } t} 
\leq \frac{C^{-2}\nu^{-2}}{\nu^{-2t} + C^{-2}\nu^{-2}t}
= \frac{1}{C^2 \nu^{2 - 2t} + t}
\end{gather*}
We can upper bound $f(w_0) \leq 4\beta^2 \lambda_{\max}(\hat{K})$ because $\frac{a}{at + b} = \frac{1}{t} \frac{a}{a + b/t}$ and $\frac{a}{a + c}$ is monotonically increasing in $a$ for all $a, c \geq 0$.

This implies for sufficiently large $t$ the convergence rate is linear:
\begin{align*}
f(w_t) &= \left\|AX_t - A\right\|_{F}^2 
\leq \frac{4\beta^2C^2 \lambda_{\max}(\hat{K})\nu^2 }{C^2 \nu^{2 - 2t} + t} \leq 2\beta^2\lambda_{\max}(\hat{K}) \nu^{2t}
\end{align*}

\end{document}